\definecolor{pearThree}{HTML}{E74C3C}
\definecolor{pearcomp}{HTML}{B97E29}
\definecolor{pearDark}{HTML}{2980B9}
\definecolor{pearDarker}{HTML}{1D2DEC}
\newcommand\encircle[1]{%
  \tikz[baseline=(X.base)]
    \node (X) [fill=black, text=white, draw, minimum size=3.2mm, inner sep=0, outer sep=0, font=\small, rounded corners=1pt] {\strut #1};}
\newtheorem{theorem}{Theorem}
\newtheorem{lemma}[theorem]{Lemma}
\newtheorem{corollary}[theorem]{Corollary}
\newtheorem{proposition}[theorem]{Proposition}
\newtheorem{definition}[theorem]{Definition}
\newtheorem{assumption}[theorem]{Assumption}
\theoremstyle{definition}
\newtheorem{remark}{Remark}
\newcommand{\PreserveBackslash}[1]{\let\temp=\\#1\let\\=\temp}
\newcolumntype{C}[1]{>{\PreserveBackslash\centering}p{#1}}
\newcolumntype{R}[1]{>{\PreserveBackslash\raggedleft}p{#1}}
\newcolumntype{L}[1]{>{\PreserveBackslash\raggedright}p{#1}}
\definecolor{reddark}{HTML}{790A1D}
\definecolor{reddeep}{HTML}{890D16}
\definecolor{redrich}{HTML}{A33038}
\definecolor{bluedark}{HTML}{00406B}
\definecolor{bluedeep}{HTML}{005178}
\definecolor{bluerich}{HTML}{00629A}
\definecolor{greendark}{HTML}{005A42}
\definecolor{greendeep}{HTML}{29673B}
\definecolor{greenrich}{HTML}{1A7639}
\definecolor{navydark}{HTML}{342071}
\definecolor{navydeep}{HTML}{3E2D86}
\definecolor{navyrich}{HTML}{4E4391}
\definecolor{orangedark}{HTML}{E29400}
\definecolor{orangedeep}{HTML}{F8AF00}
\definecolor{orangerich}{HTML}{FDC400}
\definecolor{purplepure}{HTML}{804A83}
\def\eqref#1{equation~\ref{#1}}
\def\1{\bm{1}}
\DeclareMathAlphabet{\mathsfit}{\encodingdefault}{\sfdefault}{m}{sl}
\SetMathAlphabet{\mathsfit}{bold}{\encodingdefault}{\sfdefault}{bx}{n}
\DeclareMathOperator*{\argmax}{arg\,max}
\newcommand{\lsviucb}[1]{{\small\textsc{LSVI-UCB}}\xspace}
\newcommand{\wt}[1]{\widetilde{#1}}
\newcommand{\wh}[1]{\widehat{#1}}
\newcommand{\transp}{\mathsf{T}}
\DeclareRobustCommand{\eg}{e.g.,\@\xspace}
\DeclareRobustCommand{\ie}{i.e.,\@\xspace}
\newcommand{\comment}[1]{}
\newcommand{\linucb}{{\small\textsc{LinUCB}}\xspace}
\newcommand{\algo}{{\small\textsc{SBLB}}\xspace}
\renewcommand{\transp}{\top}
\renewcommand{\intercal}{\top}
\title{Privacy Amplification via Shuffling \\ for Linear Contextual Bandits}
\author[1,2]{Evrard Garcelon}
\affil[1]{Meta AI}
\affil[2]{CREST, ENSAE}
\author[1]{Kamalika Chaudhuri}
\author[2]{Vianney Perchet}
\author[1]{Matteo Pirotta}
\date{}
\begin{document}
\doparttoc 
\faketableofcontents 

\maketitle
\begin{abstract}
  Contextual bandit algorithms are widely used in domains where it is desirable to provide a personalized service by leveraging contextual information, that may contain sensitive information that needs to be protected. Inspired by this scenario, we study the contextual linear bandit problem with differential privacy (DP) constraints. While the literature has focused on either centralized (joint DP) or local (local DP) privacy, we consider the shuffle model of privacy and we show that is possible to achieve a privacy/utility trade-off between JDP and LDP. By leveraging shuffling from privacy and batching from bandits, we present an algorithm with regret bound $\wt{\mathcal{O}}(T^{2/3}/\varepsilon^{1/3})$, while guaranteeing both central (joint) and local privacy. Our result shows that it is possible to obtain a trade-off between JDP and LDP by leveraging the shuffle model while preserving local privacy.
\end{abstract}

\section{Introduction}\label{sec:introduction}
In a \emph{contextual bandit algorithm}, at each time $t \in [T]:=\{1,\ldots,T\}$, a learner first observes a set of features $(x_{t,a})_{a\in [K]} \subset \mathbb{R}^{d}$, selects an action $a_{t}\in [K]$ out of a set of $K$ actions, and observes a reward $r_t = r(x_{t,a_t}) + \eta_t$ where $\eta_t$ is a conditionally independent zero-mean noise ($r$ is not known beforehand). 
Consequently,
the learning algorithm has to balance exploration of the environment with exploitation of the current knowledge to maximize the cumulative reward.
The performance of the the learner is measured by the cumulative regret, which is the difference between its own cumulative reward, and the cumulative reward it would have received had it always played the best action. 
Contextual bandit algorithms have achieved great practical success, and have been used for many sensitive applications such as personalization, digital marketing, healthcare and finance~\citep[e.g.,][]{mao2020realworld, wang2021roboadvising}. With these applications in mind, the literature has started investigated privacy guarantees both in bandits~\citep[e.g.,][]{Shariff2018contextual,zheng2021locally} and in RL~\citep[e.g.,][]{vietri2020privaterl,garcelon2020local}. In this paper, we focus on privacy-preserving contextual bandits.


For a contextual bandit problem on sensitive data, we assume that a single user enters the system at time $t$, and hence the context at time $t$ is their private information. To measure privacy, we use differential privacy~\citep{dwork2006calibrating} -- a privacy definition introduced by cryptographers that has emerged as the gold standard for privacy-preserving data analysis~\citep[\eg][]{erlingsson2014rappor, dwork2014algorithmic, abowd2018us, chaudhuri2011differentially, abadi2016deep, boursier2020utilityprivacy}. The standard differential privacy framework applies to static data in a batch setting, but two extensions have been proposed to address online problems. The first is Joint Differential Privacy (JDP)~\citep[\eg][]{Shariff2018contextual}, an analogue of central differential privacy, where the users trust the bandit algorithm. JDP ensures that changing a single user's private information in the data does not change the probability of any future outcome (namely, actions taken and rewards received by any other user) by much.
\begin{definition}[Joint DP]\label{def:JDP}
    For $\varepsilon>0$ and $\delta_{0}>0$, a randomized bandit agent $\mathfrak{A}$ is $(\varepsilon, \delta_{0})$-\emph{joint differentially}
    private if for every $t\in[T]$, two sequences of users, $U = \{u_{1},\dots, u_{T}\}$ and $U'=\{u_{1}',\dots, u_{T}'\}$, that differs only for the
    $t$-th user and for all events $E\subset \mathcal{A}^{[T-1]}$ then:
    \begin{align}
        \mathbb{P}(\mathfrak{A}_{-t}(U)\in E) \leq e^{\varepsilon}\mathbb{P}(\mathfrak{A}_{-t}(U')\in E) + \delta_{0}
    \end{align}
where $\mathfrak{A}_{-t}(U)$ denotes all the outputs of algorithm $\mathfrak{A}$, \ie all actions $(a_{i})_{i\neq t}$ excluding the output of time $t$ for the sequence of users $U$.
\end{definition}
A second, stronger concept is Local Differential Privacy (LDP)~\citep[\eg][]{zheng2021locally}, where the users do not trust the bandit algorithm, and transmit only sanitized versions (using a private randomizer $\mathcal{M}$) of their contexts and rewards to the algorithm. Here, LDP ensures that user information is sanitized in such a manner that changing a single user's private value does not alter the distribution of the sanitized value by much.
\begin{definition}[Local DP]\label{def:RL-LDP}
	For any $\varepsilon\geq0$ and $\delta\geq 0$, a privacy preserving
	mechanism $\mathcal{M}$  is said to be $(\varepsilon, \delta)$-\emph{locally differential} private if and only if for all users $u, u'\in \mathcal{U}$,
	contexts/rewards $((x_u, r_{u}), (x_{u'}, r_{u'})) \in (\mathbb{R}^{d} \times \mathbb{R})^{2}$ and all $O\subset
	\{ \mathcal{M}(\mathcal{B}(0, L)\times [0, 1]) \mid u\in \mathcal{U}\}$:
	\begin{align}\label{eq:LDP_RL}
			\mathbb{P}\left( \mathcal{M}((x_u, r_{u})) \in O\right) \leq e^\varepsilon\, \mathbb{P}\left( \mathcal{M}((x_{u'}, r_{u'})) \in O\right) + \delta
	\end{align}
	where $\mathcal{B}(0, L)\times [0, 1]$ is the space of context/reward associated to user $u$.
\end{definition}
Just like the standard batch setting, while LDP offers a strong notion of privacy, its utility is often much lower.
Specifically, for contextual linear bandit algorithms, while $\varepsilon$-JDP guarantees can be obtained by paying a multiplicative factor in the regret, LDP comes with a much higher impact on the regret. In fact, \citet{zheng2021locally} have shown that $\varepsilon$-LDP regret scales with $\wt{\mathcal{O}}(T^{\nicefrac{3}{4}}/\sqrt{\varepsilon})$ instead of $\wt{\mathcal{O}}(T^{\nicefrac{1}{2}}/\sqrt{\varepsilon})$ for a $\varepsilon$-JDP algorithm (see Tab.~\ref{tab:summary.regret.privacy} for more details.)

Real applications are gradually moving away from the \emph{centralized} model of privacy, favoring the simpler and stronger notion of local privacy. This change is illustrated by the rise of on-device computation for mobile application~\citep[\eg][]{apple2017learning}. The natural question we address in this paper is:
\begin{center}
	\textit{Is it possible to design a bandit algorithm with guarantees akin to local privacy but better utility?}
\end{center}

To address this question, we consider the shuffle model of privacy~\citep[e.g.,][]{cheu2019shuffling,feldman2020hiding,chen2020distributed, balle2019privacy, erlingsson2020encode} that, in supervised learning settings, allow to achieve a trade-off between central and local DP through a shuffler. The shuffler receives users' reports and permutes them before sending them to the server.
This setting was first introduced in \cite{bittau_2017}, named the \emph{ESA} model (Encode-Shuffle-Analyze) and motivated
by the need for anonymous data collection. \citet{erlingsson2020amplification} later provided an analysis of the amplification of privacy thanks to the combined
use of shuffling and local differential privacy showing that the shuffling model of privacy
is able to strike a middle ground between the totally decentralized but somewhat sample inefficient
\emph{local} model and the centralized but more sample efficient central model of privacy.
It is currently unclear whether it is possible to achieve some form of privacy/utility trade-off between these two models in the contextual bandit setting.

\subsection{Our Contributions}
In this paper, we investigate the linear contextual bandit problem under the shuffle model of privacy,
for the first time considering this privacy model in contextual bandit. Compared to the standard shuffle model (\eg in supervised learning), there are several challenges introduced by the sequential nature of the problem. First, the shuffler is executed continuously and not only once as normally considered in supervised learning. Second, the number of samples available grows with time and depends on the decisions of the learning agent. This makes the design of the algorithm non-trivial, in particular for efficiently trading-off privacy amplification and regret.

We address these challenges in two ways. First, we
carefully design separate asynchronous batch schedules for the shuffler and the bandit algorithm (\ie{} \linucb); here, batching at the shuffler is used to ensure privacy, and not just improved regret. Second, we leverage the martingale structure of the problem to analyze these batching schedules and provide privacy guarantees on the entire sequence of outputs generated by the shuffler and bandit algorithm. We summarize our main contributions as follows (see also Tab.~\ref{tab:summary.regret.privacy}):

\begin{itemize}[noitemsep,topsep=0pt,parsep=0pt,partopsep=0pt]
	\item If there is no adversary in between the shuffler and the algorithm (\ie{} the communication channel is secure), we show that it is possible to achieve a regret bound of $\wt{\mathcal{O}}\left( dT^{2/3}/\varepsilon^{1/3}\right)$ with a fixed batch size for the shuffler and dynamic batch for the bandit algorithm.
	\item In the case of adversary in between the shuffler and the users, our algorithm achieves a regret bound of $\wt{\mathcal{O}}\left(T^{3/4}/\sqrt{\varepsilon}\right)$ with a fixed batch size for the shuffler and dynamic batch for the bandit algorithm.
\end{itemize}

\begin{table}[!htp]
	\centering
		\begin{tabular}{|c|c|c|c|}
			\hline
			\multirow{2}{*}{Algorithm} & \multirow{2}{*}{Regret Bound} & \multicolumn{2}{c|}{Privacy Model}\\
			\cline{3-4}
			&&Joint DP & Local DP\\
			\hline
			\citet{Shariff2018contextual} & $\wt{\mathcal{O}}\left(\nicefrac{T^{1/2}}{\varepsilon^{1/2}}\right)$ & $(\varepsilon, \delta)$ & N/A\\
			\citet{zheng2021locally} &$\wt{\mathcal{O}}\left(\nicefrac{T^{3/4}}{\varepsilon^{1/2}}\right)$ & $(\varepsilon, \delta)$ & $(\varepsilon, \delta)$\\
			Our Cor.~\ref{cor:ldp.regret} (\emph{LDP optimization}) & $\wt{\mathcal{O}}\left(\nicefrac{T^{3/4}}{\varepsilon^{1/2}}\right)$ & $(\frac{\varepsilon^{3/2}}{T^{1/4}}, \delta)$ & $(\varepsilon, 0)$\\
			Our Cor.~\ref{cor:regret.fully.optimized} (\emph{regret optimization})  & $\wt{\mathcal{O}}\left(\nicefrac{T^{2/3}}{\varepsilon^{1/3}}\right)$ & $(\varepsilon, \delta)$ & $(\varepsilon^{2/3}T^{1/6}, 0)$\\
			\hline
		\end{tabular}
	\caption{Regret and privacy for algorithms in \emph{linear contextual bandits} for $T \geq 1/(27\varepsilon)^4$.}
	\label{tab:summary.regret.privacy}
\end{table}

\section{Preliminaries}\label{sec:preliminaries}

We consider linear contextual bandit problems, where rewards are linearly representable in the features, i.e., for any feature vector $x_{t,a}$, it writes as $r(x_{t,a}) = \langle x_{t,a}, \theta^\star \rangle$, where $\theta^\star \in \mathbb{R}^d$ is unknown.
We do not pose any assumption on the context generating process but we rely on the following standard assumptions.
\begin{assumption}\label{assumption:boundness}
    There exist $S>0$ and $L>0$ such that $\| \theta^{\star}\|_{2}\leq S$ and, for all time $t \in [T]$, arm $a \in [K]$,
    $\| x_{t,a}\|_{2}\leq L$. Furthermore, the noisy reward is $r_{t} = \langle x_{t,a}, \theta^\star \rangle + \eta_t \in [0,1]$  with $\eta_{t}$ being $\sigma$-subGaussian for some $\sigma>0$. These parameters, $L$, $S$ and $\sigma$, are  known.
\end{assumption}

The performance of the learner $\mathfrak{A}$ over $T$ steps is measured by the regret $R_{T} = \sum_{t=1}^{T} r(x_{t,a_t^\star}) - r(s_{t,a_t})$, which represents the cumulative difference between playing the optimal action $a_{t}^{\star} = \arg\max_{a\in [K]} r(x_{t,a})$ and $a_{t}$ the action selected by the algorithm.

\subsection{Shuffle-model in Contextual Bandits}
In this section, we introduce the generic shuffle-model for contextual bandit, inspired by the ESA model. In Sec.~\ref{sec:fixbatchshuffler}, we will provide the details for instantiating it in linear contextual bandits.
In the standard shuffle model, a shuffler is introduced in between the data and the algorithm. The shuffler enables privacy amplification by permuting information of $l$ users. The larger the batch, the higher the privacy amplification but also the degradation of the utility~\citep[see \eg][]{cheu2019shuffling}, leading to some fundamental trade-off between privacy amplification and utility loss.
In online learning, we observe users sequentially and it is  natural to assume that, in order to achieve privacy amplification, the shuffler builds a batch of consecutive users before communicating with the bandit algorithm.
The bandit algorithm can then behave synchronously or asynchronously w.r.t.\ the shuffler. In other words, it can update its internal statistics with the same frequency of the shuffler or use an independent batch schedule.


More formally, the shuffle-model for contextual bandit is described by the following interaction protocol (see also Fig.~\ref{fig:suffle.interaction}). At each time $t \in [T]$,
\begin{enumerate}[noitemsep, leftmargin=16pt]
    \item[\ding{182}] A new user $x_t$ receives model information from the bandit algorithm (\eg estimated rewards and confidence intervals) that are used to \emph{locally} compute the action to play. Then, the user plays the prescribed action $a_t$ which generates the associated reward $r_t$.
    \item[\ding{183}] The user sends its own privatized version of the data $\mathcal{M}_{\mathrm{LDP}}(x_{t,a_t}, r_t)$ to the shuffler. This new data is added to the shuffler batch $B^S_{k_t} :=\bigcup_{i = t_{k^S_t}}^t \big\{\mathcal{M}_{\mathrm{LDP}}(x_{i,a_i}, r_i)\big\}$, where $k^S_t$ denotes the shuffler batch at time $t$ and $t_{k}$ is the starting time of batch $k$.
    \item[\ding{184}] The bandit algorithm queries statistics from the shuffler. If the shuffler is ready to send data (\eg enough samples has been collected for privacy amplification), it computes a statistic $u$ on a permutation of the data (\ie $u(\sigma(B^S_{k_t}))$) and sends it to the bandit algorithm. Otherwise no information is provided.
    The bandit algorithm adds the new statistic to its batch (\ie $B^A_{k^A_t} := \bigcup_{i=t_{k^A_t}}^t \big\{u(\sigma(B^S_{k^S_i}))\big\}$) and may then decide to update the model as soon as data is received (\ie synchronously) or use an independent batch schedule (\ie asynchronous).
\end{enumerate}

\begin{figure}[!htb]
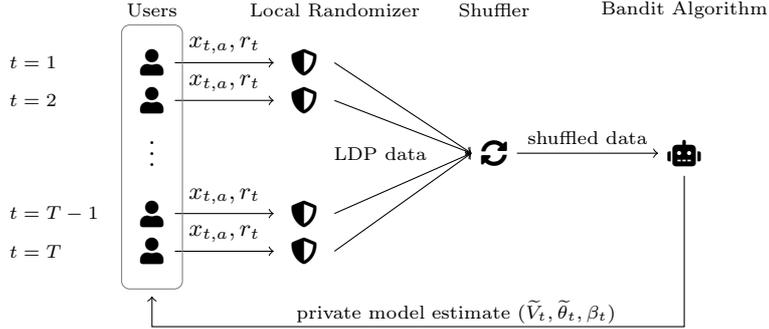

    \centering
    \tikz[]{
        \foreach \x in {0,0.5,2,2.5}
            \node at (0,\x) {\faUser};
        \node[anchor=west, font=\scriptsize] at (-2,2.5) {$t=1$};
        \node[anchor=west, font=\scriptsize] at (-2,2) {$t=2$};
        \node[anchor=west, font=\scriptsize] at (-2,0.5) {$t=T-1$};
        \node[anchor=west, font=\scriptsize] at (-2,0) {$t=T$};
        \node at (0,1.4) {$\vdots$};

        \foreach \x in {0,0.5,2,2.5} {
            \node at (2,\x) {\faShield*};
            \draw[->] (0.3,\x) -- (1.6,\x) node[midway, above, font=\small] {$x_{t,a}, r_t$};
        }

        \node (shuffler) at (4.5, 1.3) {\faSync*};
        \foreach \x in {0,0.5,2,2.5} {
            \draw[->] (2.4,\x) -- (shuffler.west);
        }
        \node[font=\scriptsize] at (3,1.3) {LDP data};

        \node (alg) at (7,1.3) {\faRobot};
        \draw[->] (shuffler.east) -- (alg.west) node[midway,above,font=\scriptsize] {shuffled data};
        \draw[->] (alg.south) |- (0, -1) -| (0,-0.6);
        \node[font=\scriptsize] at (4,-0.8) {private model estimate ($\wt{V}_t, \wt{\theta}_t, \beta_t$)};
        \draw[rounded corners=3pt, gray] (-0.4,3) rectangle (0.4, -0.5);

        \node[font=\scriptsize] at (0,3.2) {Users};
        \node[font=\scriptsize] at (2.4,3.2) {Local Randomizer};
        \node[font=\scriptsize] at (4.5,3.2) {Shuffler};
        \node[font=\scriptsize] at (7,3.2) {Bandit Algorithm};
    }
    \caption{Illustration of the shuffle model for linear contextual bandits.}
    \label{fig:suffle.interaction}
\end{figure}

The objective is to minimize the (pseudo) regret and simultaneously guarantee privacy of the data and of the statistics.
To this extent, we assume all users (including the shuffler and the bandit algorithms) behaves in an \textit{honest but curious} manner~\citep{Goldreich2009foundations}, i.e., the users and the algorithm behaves as prescribed by the protocol.
We consider different threat models for privacy, including an adversary in between \encircle{a} the user and the shuffler, \encircle{b} the shuffler and the bandit algorithm, and \encircle{c} the bandit algorithm and the user. We will show that different privacy/regret guarantees can be achieved in the different settings.

\begin{algorithm}[!htbp]
    \caption{\algo}
    \label{alg:complete.algo}
    \small
    \SetAlgoLined
    \DontPrintSemicolon
    \KwIn{LDP parameter: $\varepsilon_{0}$, privacy parameters: $\varepsilon, \delta_{0}$, regularizer: $\lambda$, context bound: $L$,
    failure probability: $\delta$, low switching parameter: $\eta$, encoding parameter: $m$, dimension: $d$, fix batch size: $\ell$}
    Initialize $j^S = j^A = 0$, $\wt{\theta}_0 = 0$, $\wt{V}_0 = \lambda I_d$ and $p = 2\big(\exp\left(\frac{2\varepsilon_{0}}{md(d+3)}\right) + 1\big)^{-1}$ \;
    \For{$t = 0, 1, \ldots$} {
        \encircle{c} \textbf{Communication with the user}\;
        User receives $\wt\theta_{j^A}$, $\wt{V}_{j^A}$ and $\beta_{j^A}$ and selects
        $
            a_t \in \argmax_{a\in [K]} \langle x_{t,a}, \wt{\theta}_{j^A}\rangle + \beta_{j^A} \| x_{t,a}\|_{\wt{V}_{j^A}^{-1}}
        $\;
        Observe reward $r_t$ and compute private statistics
        $(\wt{b}_t, \wt{w}_t )= \mathcal{M}_{\mathrm{LDP}}(x_{t,a_t},r_t, L, \varepsilon_{0}, m)$ (Alg.~\ref{alg:esa_alg})\;
        \encircle{a} \textbf{Communication with the shuffler}\;
        $B^S_{j^S} = B^S_{j^S} \cup (\wt{b}_t, \wt{w}_t)$\;
        \If{$|B^S_{j^S}| = l$}{
            Set $t_{j^S+1} = t$, compute a permutation $\sigma$ of $\llbracket t_{j^S} + 1, t_{j^S+1} \rrbracket$ and compute aggregate statistics
            \[
                \forall i\leq d, k\leq i, \qquad Z_{j^S, i} = \sum_{n=1}^{l}\sum_{q=1}^{m} \tilde{b}_{\sigma(n),i,q}
                ~~\text{ and } ~~
                U_{j^S, i, k} = \sum_{n=1}^l \sum_{q=1}^{m} \wt{w}_{\sigma(n),i,k,q}
            \]\;
            \vspace{-16pt}
            Set $U_{j^S, i, k} = U_{j^S, k, i}$, $B_{j^S+1} = \emptyset$ and $j^S = j^S + 1$\;
            \encircle{b} \textbf{Communication with the bandit algorithm}\;
            Receives $(Z_{j^S-1}, U_{j^S-1})$ and compute candidate statistics
            \begin{align*}
                \wt{B}_{j^A+1}
                &= \wt{B}_{j^A+1} + \frac{Z_{j^S-1}}{m(1-p)} - \frac{l^{S}}{2(1-p)}\\
                \wt{V}_{j^A+1}
                &= \wt{V}_{j^A+1} + \frac{U_{j^S-1}}{m(1 - p)} - \frac{l^{S}}{2(1-p)} + 2(\lambda_{j^A+1} - \lambda_{j^A})I_d
            \end{align*}
            \If{$\text{det}(\wt{V}_{j^A+1}) \geq (1 + \eta)\text{det}(\wt{V}_{j^A})$}{
                Compute $\tilde{\theta}_{j^A+1} = \frac{1}{L}\wt{V}_{j^A+1}^{-1}\wt{B}_{j^A+1}$\;
                Set $t_{j^A+1} = t$, $\beta_{j^A+1}$ and $\lambda_{j^A+1}$ as in Eq.~\eqref{eq:confidence_width} and Eq.~\eqref{eq:lambda_reg}\;
                Set $j^A = j^A+1$, $\wt{B}_{j^A+1} = \wt{B}_{j^A}$ and $\wt{V}_{j^A+1} = \wt{V}_{j^A}$\;
            }
        }
    }
\end{algorithm}

\section{Shuffle Model with Fixed-Batch Shuffler} \label{sec:fixbatchshuffler}
In this section, we provide an instantiation of the shuffle model for linear contextual bandit.
We base our algorithm on the non-private low-switching \linucb~\citep{abbasi2011improved}, that incrementally builds an estimate $\wh{\theta}_j$ of the unknown parameter $\theta^\star$.
Since the algorithm leverages sum of statistics received from the users, we consider the binary sum mechanism inspired by~\citep{cheu2019shuffling} as building block for achieving privacy in the shuffle model. While this scheme allows us to obtain standard LDP guarantees on users information, the shuffler is responsible to provide privacy amplification via batching and shuffling. The main challenge is to combine these elements with the low-switching scheme of \linucb. As we will explain later, adaptive batching at the level of \linucb is not for computational efficiency but it is rather fundamental for obtaining a good privacy/regret trade-off.

\subsection{Algorithmic Design}
In this section, we provide a full description of the Shuffle-Batched Linear Bandit (\algo) algorithm. Intuitively, the algorithm relies on a shuffler with fixed batch size to achieve privacy amplification from LDP data, and a variation of \linucb with dynamic batch schedule based on the determinant condition. The pseudo-code is reported in Alg.~\ref{alg:complete.algo}.

\paragraph{\ding{182}~Action Selection.} At each time $t$, the user $x_t$ receives, from the bandit algorithm, an estimate of the model composed by a parameter $\wt{\theta}_{k^A_t} \in \mathbb{R}^d$, a design matrix $\wt{V}_{k^A_t} \in \mathbb{R}^{d\times d}$ and confidence width $\beta_{k^A_t}$. Notice that these are parameters computed at the beginning of the batch $k^A_t$ of the bandit algorithm.
Then, the action is selected by maximizing the following standard optimistic problem:
\[
    a_t \in \argmax_{a\in [K]} \left\{ \langle x_{t,a}, \wt{\theta}_{k^A_t}\rangle + \beta_{k^A_t} \| x_{t,a}\|_{\wt{V}_{k^A_t}^{-1}} \right\}
\]
where $\beta_t$ is the size of the confidence ellipsoid, defined in Lem.~\ref{lem:confidence_ellipsoid}, which  roughly scales as $\wt{O}\Big(t_{k_{t}^{A}}^{1/4}\Big)$
. Note that it is possible to directly access the features $x_{t,a}$ of the user since this computation happens locally.
The action is played and a reward $r_t$ is observed.

\paragraph{\ding{183}~Local Privacy and Shuffler.}
Users' information is then protected through a local private mechanism $\mathcal{M}_{\mathrm{LDP}}$.
As noticed in~\citep{Shariff2018contextual},   only the information required by the algorithm, to compute $\wt{\theta}$ through ridge regression and the associated confidence interval, must be privatized.
We are thus interested in privatizing the quantities $x_{t,a_t} r_t$ and $x_{t,a_t}x_{t,a_t}^\transp$.
To obtain LDP quantities, we leverage a variation of the private mechanism introduce by~\citet{cheu2019shuffling}. We independently privatize each component of the vector $x_{t,a_t} r_t$
 and of the upper triangular part of the matrix $x_{t,a_t}x_{t,a_t}^\transp$,
the rest follows from the symmetric structure.
Each entry is normalized to $[0,1]$ and approximated by a
truncated 0/1-bit representation, which length is controlled
by the parameter $m \in \mathbb{N}^*$. The full procedure is
reported in Alg.~\ref{alg:esa_alg}.

The shuffler receives the privatize data $\mathcal{M}_{\mathrm{LDP}}(x_{t,a_t}, r_t)$ and adds it to the current batch.
The role of the shuffler is to provide additional privacy by sending data in a random order compared what it has received. At a high-level this provide an additional privacy guarantee because it breaks the link between a given user and its data. Indeed for an algorithm receiving data from the shuffler, the $t$-th row of data has little chance to come from user $t$.
If the shuffler has access to a batch of size $l$, it can provide a privacy amplification of level $l^{-1/2}$~\citep[see e.g.,][Thm. 5.4]{cheu2019shuffling}.
Ideally, we would like to shuffle all the data at each time $t$,
achieving a privacy amplification of $t^{-1/2}$.
However, this approach would not provide enough privacy
due to the fact an adversary would have multiple observations
of the same data, thus greatly decreasing the advantage of using the
 shuffling mechanism. To avoid this issue, we need to force
 the shuffler to use batches and discard samples after each batch.
  Let's denote by $l^S$ the fix batch size of the shuffler.
At time $t$, if the batch $B^S_{k_t}$ is of size $l^S$, the shuffler
permutes the data and compute the statistics required by the bandit
algorithm. {\color{black} To compute those statistics, the shuffler uses a secure and trusted third-party different that the shuffler. This third-party is assumed to be secure with for example the use of encrypted communication between the shuffler and it, like in \citep{cheu2019shuffling}}. When $|B^S_{k_t}| < l^{S}$, the shuffler do not provide any information to the bandit algorithm.
The shuffling setting is not fundamentally different than the LDP one, but it allows to achieve a large gain in privacy in the high data regime from multiple users. Shuffling allows to achieve better privacy guarantees and, overall, it improves the standard LDP protocol with virtually no cost.


\paragraph{\ding{184}~Model Estimation (the bandit algorithm).}
As last step, the bandit algorithm queries new data to the shuffler which replies only if the batch is full. If no data is received, the bandit algorithm does nothing. Otherwise, the bandit algorithm receives summary statistics $Z_{k^S_t}$ and $U_{k^S_t}$ corresponding to the sum over the shuffled batch $B^S_{k^S_t}$ of the LDP data associated to $xr$ and $xx^\transp$.
The algorithm could behave synchronously with the batch schedule of the shuffler and update the model by updating the design matrix $\wt{V}_{k^S_t+1}$ and parameter $\wt{\theta}_{k^S_t+1}$. However, this behavior would lead to a worse privacy/regret trade-off than an asynchronous data-adaptive schedule. Although it is possible to achieve the same regret bound in non-private settings with static and dynamic batch schedules, in the private case it is no more the case
because of required inflation of the confidence intervals
by a factor $t^{1/4}$ to deal with concentrations of private statistics.
In App.~\ref{app:schedule_update_alg}, we provide a more formal support to this claim.

As a consequence, we shall leverage the determinant-based condition introduced by~\citet{abbasi2011improved}.
Upon receiving the data at time $t$, the bandit algorithm has access to the following set of private statistics $\Big\{ (Z_{i}, U_i), i \in [k^S_t] \Big\}$, which is further divided into batches of various lengths.
Denote by $j = k^A_{t}$ the bandit batch at time $t$ with associated parameters $\wt{V}_{j}$, $\wt{B}_{j}$ and $\wt{\theta}_{j}$ computed at the beginning of the batch. Then, we denote by $\wt{V}_{t}$
the new design matrix obtained by updating the matrix $\wt{V}_{j}$ with all the statistics received from the shuffler after $t_j$. If $\det(\wt{V}_{t}) \geq (1+\eta) \wt{V}_{j}$, then a new batch is started and the model is updated,
 \ie $\wt{\theta}_{j+1}= \frac{1}{L}\wt{V}_{j+1}^{-1}\wt{B}_{j+1}$ is computed through ridge-regression. {\color{black} In a LinUCB fashion, the last step for
 the algorithm is to compute the size of a confidence intervals around $\wt{\theta}_{j+1}$ containing the true parameter $\theta^{\star}$. Contrary to the non-private setting~\citep{abbasi2011improved}, the algorithm
 uses wider confidence intervals to account for the noise added to ensure privacy. This increase is quite significant as the confidence intervals grow
 at a $t^{1/4}$ rate compared to $\log(t)$ in the non private setting.} Refer to Lem.~\ref{lem:confidence_ellipsoid} for the explicit definition.

\section{Analysis of The Shuffle Model with Fixed-Batch Shuffler}\label{sec:algo1.analysis}
In this section, we provide the privacy and regret guarantees of \algo. We first begin to describe which privacy guarantees are attainable in the different attack scenarios outlined in the introduction. Then we show how the regret of \algo is impacted by the these attack models.

For sake of clarity, we recall the parameters that regulates the privacy/regret analysis of our algorithm.
The first parameter $\varepsilon_{0}$ regulates the level of local differential privacy introduced by the local randomizer $\mathcal{M}_{\text{LDP}}$. However, to
simplify the analysis, we often use the alternative parameter $p := 2\big(\exp\left(\frac{2\varepsilon_{0}}{md(d+3)}\right) + 1\big)^{-1}$ derived from $\epsilon_0$ (see Alg.~\ref{alg:esa_alg}). The other two parameters $(\varepsilon, \delta_{0})$
controls the level of joint differential privacy that \algo should attain.

\subsection{Privacy Analysis of \algo}

  As discussed in Sec.~\ref{sec:preliminaries}, the shuffling model encompasses all the multiple scenarios in which the privacy of users can be threatened. 

  \vspace{.15in}
  \noindent
  \encircle{a} \textbf{Compromised communication between the user and the shuffler.}
  In the first and most harmful scenario, the communication between the
  users and the shuffler is not secured and the data can be observed by an adversary. This is the standard LDP setting in linear contextual bandit. In this case, the use of the local randomizer $\mathcal{M}_{\text{LDP}}$ guarantees
  that the data sent by the user to the shuffler are $\varepsilon_{0}$-LDP. That is to say the most stringent privacy guarantees in the differential privacy model.

\begin{proposition}[LDP guarantee]\label{prop:ldp_guarantee}
    For any $\varepsilon_{0}> 0$ and $m\in \mathbb{N}^{\star}$, $\mathcal{M}_{\text{LDP}}(.,., \varepsilon_{0}, L, m)$  is $\varepsilon_{0}$-LDP. 
\end{proposition}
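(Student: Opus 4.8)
The plan is to unfold the definition of $\mathcal{M}_{\mathrm{LDP}}$ (Alg.~\ref{alg:esa_alg}) and exhibit it as the independent composition of many elementary randomized-response bits, each of which is $\gamma$-LDP for a suitable $\gamma$, and then invoke basic composition of pure differential privacy. Because the statement concerns the randomizer applied to a \emph{single} user's data, there is no sequential/online subtlety here: it is a one-shot local privacy claim, and the entire difficulty is bookkeeping the per-bit privacy loss.

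First I would recall which quantities are privatized. By Assumption~\ref{assumption:boundness} the user data $(x_{t,a_t}, r_t)$ satisfies $\|x_{t,a_t}\|_2 \le L$ and $r_t \in [0,1]$, so each of the $d$ coordinates of $x_{t,a_t} r_t$ and each of the $d(d+1)/2$ upper-triangular entries of $x_{t,a_t} x_{t,a_t}^\transp$ can be affinely rescaled into $[0,1]$ using only $L$; this yields $M := d + d(d+1)/2 = d(d+3)/2$ scalar values in $[0,1]$. The mechanism encodes each such value into $m$ bits and then flips every bit independently with probability $p/2$, so the overall output is a family of $mM = m\,d(d+3)/2$ noisy bits produced by randomizers that are conditionally independent given the input.

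The key computation is the per-bit guarantee. For a single output bit, whether the underlying $0/1$ encoding is deterministic or a Bernoulli rounding, the probability that the reported bit equals any fixed value lies in the interval $[\,p/2,\, 1-p/2\,]$ for every admissible input (the encoding can at worst push the "true" value to one of the two endpoints, and the flip then keeps it in this range). Hence for any two inputs the likelihood ratio of a single reported bit is at most $(1-p/2)/(p/2) = (2-p)/p$, so each bit is $\gamma$-LDP with $e^{\gamma} = (2-p)/p$. Substituting $p = 2\big(\exp(2\varepsilon_{0}/(md(d+3)))+1\big)^{-1}$ gives exactly $(2-p)/p = \exp\!\big(2\varepsilon_{0}/(md(d+3))\big)$, i.e. $\gamma = 2\varepsilon_{0}/(md(d+3))$.

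Finally I would apply the basic (sequential) composition theorem for $\varepsilon$-DP across the $mM$ independent bits: the total budget is $mM\,\gamma = \frac{m\,d(d+3)}{2}\cdot \frac{2\varepsilon_{0}}{md(d+3)} = \varepsilon_{0}$, which is precisely the $\varepsilon_{0}$-LDP condition of Definition~\ref{def:RL-LDP} for the randomizer on an arbitrary user's context/reward pair. The main obstacle is the per-bit argument: one must verify that the encode-then-flip channel keeps both output probabilities uniformly in $[p/2, 1-p/2]$ over all admissible inputs, so that the worst-case likelihood ratio is governed by the randomized-response step and is not inflated by the (possibly data-dependent) encoding, and one must confirm that the normalization indeed places every entry in $[0,1]$ via the boundedness assumption. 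Once these two points are pinned down, the composition step and the arithmetic tying $p$ to $\varepsilon_{0}$ are routine.
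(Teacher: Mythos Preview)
Your proposal is correct and follows essentially the same approach as the paper: both count $m\cdot d(d+3)/2$ independent output bits, observe that each is a randomized-response bit with worst-case likelihood ratio $(2-p)/p = 2/p-1 = e^{2\varepsilon_0/(md(d+3))}$, and multiply (equivalently, apply basic composition) to obtain the overall $\varepsilon_0$-LDP guarantee. Your per-bit argument via the $[p/2,\,1-p/2]$ output-probability interval is slightly more explicit than the paper's, which simply invokes that $R_p^{0/1}$ is a randomized-response mechanism, but the two arguments are otherwise the same.
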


    This particular scenario corresponds to a decentralized setting where the users do not trust the algorithm or the communication channel between them to
    be secure and they have to protect the privacy of their data at a individual level, that is to say to guarantee that the data sent could have been sent
    by anyone else. This setting (\ie the ``pure'' LDP scenario) is also the one studied in \citep{zheng2021locally}. We will show that we can recover their result when we want to guarantee the highest level of LDP privacy. However, at the cost of sacrificing a portion of LDP level, we can obtain a better regret bound, closing the gap with the less stringent JDP setting.


    \vspace{.15in}
  \noindent
  \encircle{b} \textbf{Compromised communication between the shuffler and the bandit algorithm.}
    In another privacy loss scenario,  an adversary can observe the same data as the bandit algorithm. Stated otherwise, the adversary has access to the output of the shuffler. In that case, \algo is still $\varepsilon_{0}$-LDP but  stronger differential privacy guarantees can be achieved thanks to privacy amplification. In this scenario, the adversary observes the different outputs of the shuffler, that are statistics computed on a number of different users.
    The question, in the differential privacy setting, is whether it is possible to know that one particular user (\ie user's data) was involved in the computation of those statistics.


    \citet{tenenbaum2021shuffle} studies a weaker version of this question in the multi-armed bandit setting where an adversary \emph{only observes the output of the shuffler for one time step}, while we focus on the more challenging case where the adversary observes all the history. Technically, this is the same difference
    as ensuring event-level privacy in the continual observation model compared to a differential privacy on a single query.
    Note that it would be possible to obtain a better regret bound if we consider the adversary model in~\citep{tenenbaum2021shuffle} since a smaller level of privacy is required (see Remark~\ref{rem:weaker.privacy.model}).

    The complicated aspect is to guarantee that the whole sequence of $M_S$ vectors and matrices $(Z_{j^{S}}, U_{j^{S}})_{j^{S}=1}^{M_S}$ is private, and not a single output at a given time. This issue is solved by leveraging batching.
    Formally, we can show in this scenario that the sequence $(Z_{j^{S}}, U_{j^{S}})_{j^{S}}$ is $(\varepsilon, \delta_{0} + \delta)$-DP for any $\delta_{0}, \delta\in (0,1)$ and $\varepsilon\in (0,1)$.
    \begin{theorem}\label{thm:privacy.alg}
        For any $\varepsilon\in (0,1)$, $\delta_{0},\delta \in (0,1)$, encoding parameter $m$ and LDP parameter $\varepsilon_{0}>0$, let $p = 2(e^{2\varepsilon_{0}/md(d+3)} + 1)^{-1}$.
        Then if $l^{\star}$, the length of a shuffler batch, satisfies
        $l^{\star}p\geq 14\log(8mT/\delta_{0})$ and:
        {\small\begin{equation}\label{eq:equation_privacy_dp}
            \begin{aligned}
            \sqrt{\left(2 + \left(\frac{\varepsilon l^{\star}}{32d(d+3)\log(8mT/\delta_{0})\sqrt{2T\ln(2T/\delta_{0})}}\right)^{2} \right)^{2} - 4} \geq 1 - 2p +  2\sqrt{\frac{2\log(2mT/\delta_{0})}{l}}&
            \\ + \left(\frac{\varepsilon l^{\star}}{32d(d+3)\log(8mT/\delta_{0})\sqrt{2T\ln(2T/\delta_{0})}}\right)^{2},&\\
            \end{aligned}
        \end{equation}}
        the sequence $(Z_{j^{S}}, U_{j^{S}})_{j^{S}}$ is
        central $(\varepsilon, \delta_{0} + \delta)$-DP.\footnote{We provide the definition of central DP in Def.~\ref{def:centralDP} in App.~\ref{app:proof}. Note that the concept of central DP is at the core for proving JDP results, in fact thanks to Claim $7$ in \citep{Shariff2018contextual} having a sequence $(\wt{V}_{t}, B_{t})_{t}$ is $(\varepsilon, \delta)$-DP implies that a bandit algorithm
        based on this sequence is $(\varepsilon, \delta)$-DP.}
    \end{theorem}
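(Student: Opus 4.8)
The plan is to certify the central DP guarantee in three layers: a per-user local guarantee, a per-batch amplification obtained from the binary-sum view of the shuffler output, and finally a sequence-level argument that controls the adaptive dependence across batches. I would start from Proposition~\ref{prop:ldp_guarantee}, which gives that each user's message from $\mathcal{M}_{\mathrm{LDP}}$ is $\varepsilon_0$-LDP; concretely, each of the $md(d+3)/2$ transmitted bits is a randomized response that reports the true bit with probability $1-p/2$ and flips with probability $p/2$, where the per-bit budget $2\varepsilon_0/(md(d+3))$ aggregates to $\varepsilon_0$ across coordinates (this is exactly the relation defining $p$). The crucial structural observation is that the shuffler releases only the coordinate-wise sums $Z_{j^S},U_{j^S}$, and a sum of randomized-response bits is distributed as a deterministic function of the true coordinate plus $\mathrm{Binomial}(N,1/2)$ noise, where $N\sim\mathrm{Binomial}(l^\star m, p)$ counts the bits answered at random. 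This is the binary-sum mechanism of \citep{cheu2019shuffling}, and it is what converts local noise into a central (aggregate) guarantee.

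Second, I would make the per-batch amplification quantitative. Conditioned on $N$ not being too small, I would bound the hockey-stick divergence between the two $\mathrm{Binomial}(N,1/2)$-noised sums arising from neighboring inputs (per-coordinate sensitivity $m$), which yields a central $(\varepsilon_1,\delta)$ guarantee with $\varepsilon_1$ shrinking like $\sqrt{m/(l^\star p)}$; composing over the $d(d+3)/2$ coordinates of $(x r,\ xx^\transp)$ is the origin of the $d(d+3)$ factor in the statement, and the exact binomial-indistinguishability calculation is what produces the closed algebraic form of \eqref{eq:equation_privacy_dp} rather than a Gaussian surrogate. To make ``$N$ not too small'' hold for \emph{every} batch, coordinate, and bit-plane simultaneously, I would apply a Chernoff bound to $N$ with a union bound over the at most $T$ steps and $m$ bit-planes: this is precisely what the hypothesis $l^\star p\ge 14\log(8mT/\delta_0)$ buys, and it is the source of the $\log(8mT/\delta_0)$ term, of the concentration slack $2\sqrt{2\log(2mT/\delta_0)/l}$ on the right-hand side, and of the additive $\delta_0$ in the conclusion (the probability that some noise count is ever atypically small).

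Third, and this is the main obstacle, I would lift the per-batch guarantee to the entire released sequence $(Z_{j^S},U_{j^S})_{j^S}$. The difficulty is adaptivity: although the user at time $t$ contributes its LDP message to a single shuffler batch $k^S_t$, altering that user changes the release of batch $k^S_t$, hence the broadcast model $(\wt{\theta},\wt{V},\beta)$, hence the contexts $x_{s,a_s}$ selected by all later users, and therefore the inputs to all subsequent batches. I would therefore write the privacy loss of the whole sequence as a sum of per-batch conditional log-likelihood ratios and analyze it as a martingale: conditioning on the past releases fixes the algorithm state and thus every downstream action, so each increment is governed by the single-batch bound of the previous step, and a martingale (Azuma/Bernstein) concentration over the at most $T$ steps yields the factor $\sqrt{2T\ln(2T/\delta_0)}$ appearing in $\alpha:=\varepsilon l^\star/\bigl(32 d(d+3)\log(8mT/\delta_0)\sqrt{2T\ln(2T/\delta_0)}\bigr)$. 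Requiring this amplified loss to stay below $\varepsilon$ is exactly what \eqref{eq:equation_privacy_dp} encodes: the left-hand side $\sqrt{(2+\alpha^2)^2-4}=\alpha\sqrt{4+\alpha^2}$ is the amplified contribution, while the right-hand side collects the randomized-response gap $1-2p$, the concentration slack, and the quadratic term $\alpha^2$; the condition is satisfiable precisely when the local noise level $p$ is large enough relative to the central target $\varepsilon$ and the batch size $l^\star$.

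Finally I would assemble the pieces via Definition~\ref{def:centralDP}: on the high-probability event (failure mass $\delta_0$) that every binomial noise count is large, the martingale tail bound certifies that the sequence privacy loss exceeds $\varepsilon$ with probability at most $\delta$, and the standard equivalence between the tail of the privacy-loss variable and approximate DP gives central $(\varepsilon,\delta_0+\delta)$-DP. The step I expect to be most delicate is the martingale control of the cross-batch adaptive dependence, since it must simultaneously respect the data-dependent batch schedule driving \algo (and \linucb) and faithfully track how a single user's perturbation propagates through the adaptively chosen actions of all later users without inflating the sensitivity.
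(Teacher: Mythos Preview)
Your three-layer plan matches the paper's proof structure closely. The per-batch step is the same: the paper packages it as Proposition~\ref{prop:privacy_per_batch}, obtained by invoking Theorem~5.4 of \citet{cheu2019shuffling} on the binary-sum mechanism, together with a high-probability event $E_\delta$ controlling the binomial noise counts; you instead reprove that result in place via the hockey-stick analysis of $\mathrm{Binomial}(N,1/2)$ noise. The main difference is your third layer. The paper does \emph{not} run a bespoke martingale argument on the privacy-loss random variable: once each batch is shown to be $(\varepsilon\sqrt{l^\star/T},\delta_0+\delta)$-DP, it simply applies the advanced composition theorem of \citet{dwork2010differential} over the $T/l^\star$ shuffler batches, which already accommodates the adaptive dependence you worry about (later inputs may depend on earlier outputs). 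Your martingale route is essentially a re-derivation of adaptive advanced composition and would land at the same place, so it is correct but more laborious; what it buys is a self-contained treatment of the cross-batch adaptivity that the paper's one-line ``using advanced composition yields the result'' leaves implicit. One small calibration: the $\sqrt{2T\ln(2T/\delta_0)}$ factor arises from composing $M_S=T/l^\star$ batches (not $T$ steps) via advanced composition, with the extra $\sqrt{l^\star}$ absorbed into the numerator of your $\alpha$.
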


    The result of Thm.~\ref{thm:privacy.alg} is a consequence of the advanced composition theorem~\citep{dwork2010differential}.
    Indeed,  thanks to shuffling, for any batch $j^{S}$, the statistics
    $(Z_{j^S}, U_{j^{S}})$ are $\left( \frac{\sqrt{\varepsilon(1-p)}}{T^{1/4}}, \frac{\delta_{0}\varepsilon}{\sqrt{T}(1-p)}\right)$-DP, since the batch length $l$ is approximately $\frac{\sqrt{T}(1 - p)}{\varepsilon}$.
    As a consequence, when composing them together we get that the central DP level of each batch is  $\wt{\mathcal{O}}\left( \varepsilon\sqrt{\frac{l^{\star}}{T}}\right)$.
    Therefore by advanced composition, since we have a total number of batches $M_S \approx \sqrt{T}$, the total privacy over the sequence of $(Z_{j^{S}}, U_{j^{S}})_{j^{S}}$ is of order $\wt{\mathcal{O}}\left(\varepsilon\sqrt{\frac{l^{\star}}{T}}\times \sqrt{\frac{T}{l^{\star}}} \right)$
    that is to say of order $\wt{\mathcal{O}}\left(\varepsilon \right)$.

    \vspace{.15in}
    \noindent
    \encircle{c} \textbf{Compromised Communication between the bandit algorithm and the users.}
    Similarly to~\citet{Shariff2018contextual}, in the final scenario we consider, an adversary can observe the same data coming from \algo as the users, \ie the stream
    of estimates $(\wt{\theta}_{k^A_t}, \wt{V}_{k^A_t}, \beta_{k^A_t})_{t \in [T]}$.
    Recall that the bandit algorithm uses a dynamic batch schedule based on the determinant technique and it is asynchronous w.r.t.\ the shuffler. This leads to a number of bandit batches roughly of order $\log(T)$. While we have to guarantee privacy on a smaller number of element ($\log(T)$ compared to $\sqrt{T}$ in the shuffler), we are technically limited by the former scenario \encircle{b}. 
    As shown in Prop.~\ref{prop:jdp_guarantee},  \algo is $(\varepsilon, \delta_{0}+ \delta)$-JDP w.r.t.\ the sequence $(\wt{\theta}_{j^A}, \wt{V}_{j^A}, \beta_{j^A})_{j^A}$ since $(Z_{j^{S}}, U_{j^{S}})_{j^{S}}$ is $(\varepsilon, \delta_{0}+ \delta)$-DP.

    \begin{proposition}[JDP guarantee]\label{prop:jdp_guarantee}
        For any $\varepsilon\in (0,1)$, $\varepsilon_{0}>0$, $\delta, \delta_{0}\in (0,1)$, $m\in \mathbb{N}^{\star}$, selecting the length of a shuffler like in Thm.~\ref{thm:privacy.alg} ensures that  the sequence of $(\wt{\theta}_{j^{A}}, \wt{V}_{j^{A}}, \beta_{j^{A}})_{j^{A}}$ is
        $(\varepsilon, \delta + \delta_{0})$-DP. In other words \algo is $(\varepsilon, \delta + \delta_{0})$-JDP.
    \end{proposition}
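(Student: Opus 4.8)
The plan is to obtain the JDP guarantee as an essentially immediate consequence of the central-DP guarantee on the shuffler statistics already established in Thm.~\ref{thm:privacy.alg}, by chaining two standard facts: the immunity of differential privacy to post-processing, and the billboard lemma (invoked as Claim~$7$ in \citep{Shariff2018contextual}), which converts central DP of a released sequence of statistics into JDP of a bandit algorithm whose actions are computed from that sequence. Concretely, I would show (i) the model sequence $(\wt{\theta}_{j^A}, \wt{V}_{j^A}, \beta_{j^A})_{j^A}$ is a deterministic function of the shuffler outputs $(Z_{j^S}, U_{j^S})_{j^S}$, hence inherits their central $(\varepsilon, \delta+\delta_0)$-DP level, and (ii) each user selects its action locally from this sequence and its own context, so the billboard lemma yields JDP with the same parameters.

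For step (i), I would first note that selecting the shuffler batch length as prescribed in Thm.~\ref{thm:privacy.alg} makes the full sequence $(Z_{j^S}, U_{j^S})_{j^S}$ central $(\varepsilon, \delta + \delta_0)$-DP. It is important here that the shuffler uses a \emph{fixed} batch size $l^S$, so its batch boundaries fall at the predetermined times $l^S, 2l^S, \dots$, independent of the data; this lets us treat $(Z_{j^S}, U_{j^S})_{j^S}$ as a single DP-protected object over a non-adaptive partition of the user stream. Reading Alg.~\ref{alg:complete.algo}, the candidate statistics $\wt{B}_{j^A+1}$ and $\wt{V}_{j^A+1}$ are affine in $Z_{j^S-1}$ and $U_{j^S-1}$; the data-adaptive decision to open a new bandit batch is governed only by the determinant test $\det(\wt{V}_{j^A+1}) \ge (1+\eta)\det(\wt{V}_{j^A})$, itself a function of these statistics; and $\wt{\theta}_{j^A+1} = \tfrac{1}{L}\wt{V}_{j^A+1}^{-1}\wt{B}_{j^A+1}$, $\beta_{j^A+1}$, and $\lambda_{j^A+1}$ are all computed from them via the formulas of Lem.~\ref{lem:confidence_ellipsoid}. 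Thus the map $(Z_{j^S}, U_{j^S})_{j^S} \mapsto (\wt{\theta}_{j^A}, \wt{V}_{j^A}, \beta_{j^A})_{j^A}$ is pure post-processing, and by immunity to post-processing the model sequence is also central $(\varepsilon, \delta + \delta_0)$-DP.

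For step (ii), I would apply the billboard lemma to upgrade this to JDP of \algo. The action $a_t$ of user $t$ is selected locally by maximizing $\langle x_{t,a}, \wt{\theta}_{k^A_t}\rangle + \beta_{k^A_t}\, \| x_{t,a}\|_{\wt{V}_{k^A_t}^{-1}}$ over $a$, i.e., purely from the broadcast triple $(\wt{\theta}_{k^A_t}, \wt{V}_{k^A_t}, \beta_{k^A_t})$ and the user's own context; this triple plays the role of the public billboard. Since the billboard is $(\varepsilon, \delta+\delta_0)$-DP and every user's output depends only on the billboard and on its own private data, the restricted output $\mathfrak{A}_{-t}$ over the users $i \neq t$ satisfies Def.~\ref{def:JDP} with parameters $(\varepsilon, \delta+\delta_0)$, i.e., \algo is $(\varepsilon, \delta+\delta_0)$-JDP. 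The step I expect to require the most care is not a calculation but the structural verification that the data-adaptive batch schedule of the bandit and the local action rule genuinely fit the post-processing/billboard template: one must confirm that user $t$'s raw data enters the pipeline only through its already-privatized LDP report $\mathcal{M}_{\mathrm{LDP}}(x_{t,a_t}, r_t)$ fed into the fixed-batch shuffler, so that the adaptivity of $(k^A_t)_t$ introduces no privacy leakage beyond what Thm.~\ref{thm:privacy.alg} already accounts for.
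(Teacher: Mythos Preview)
Your proposal is correct and follows essentially the same approach as the paper: the paper justifies Prop.~\ref{prop:jdp_guarantee} by noting that the bandit algorithm is a deterministic post-processing of the shuffler outputs $(Z_{j^S}, U_{j^S})_{j^S}$ (so central DP carries over from Thm.~\ref{thm:privacy.alg}) and then invokes Claim~7 of \citet{Shariff2018contextual}, i.e., the billboard lemma, to obtain JDP. Your write-up is in fact more explicit than the paper's, which relegates the argument to the surrounding discussion and a footnote rather than a standalone proof.
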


    Since we are directly leveraging advance composition,  we cannot get any privacy amplification when we consider \encircle{b} and \encircle{c} together.
    Scenario \encircle{c} is indeed the most stringent adversary model in the shuffle-model, limiting the gain in the privacy/regret we can obtain compared to the pure LDP setting. It is however possible to achieve a better privacy/utility trade-off when considering only scenario \encircle{c} (and not \encircle{b}), but we believe it is a much weaker attack scenario. In both scenarios, \encircle{b} and \encircle{c}, the objective is to ensure Joint Differential Privacy. Model \encircle{c} deals with the issue when attackers can submit potentially false contexts to the bandit algorithm and observes the action recommended with the objective to learn the context/reward of a target user. Guaranteeing that this task is difficult is the objective of Joint Differential Privacy. In this paper, we use a deterministic bandit algorithm therefore in terms of privacy scenarios \encircle{b} and \encircle{c} are the same (thanks to the post-processing lemma). However, one could think of using a randomized algorithm and therefore improve the privacy of the whole scheme. 
    
    \begin{remark}
        In online learning, JDP and central-DP are not equivalent definitions. A DP constraint on the actions selected implies that the probability of selecting any action is strictly positive thus hindering the algorithm to select the optimal action. Indeed, as noted in \citep{Shariff2018contextual} (see Claim 13) any central-DP linear contextual bandit algorithm must incur linear regret, whereas in the weaker definition of JDP it is possible to attain a sublinear regret. The fact that the computation of the action is local is necessary to achieve a sublinear regret.
    \end{remark}

\subsection{Regret Analysis of \algo}
In the previous section, we stated several  privacy guarantees of \algo with different attack models. We shall now  show the impact of those privacy guarantees on the regret. As mentioned, shuffling allows to regulate the level and type of privacy desired by trading-off the regret guarantee. In \algo, this trade-off is regulated by the parameter $\varepsilon_0$ which has impact on all the main elements in the privacy and regret analysis (\eg batch size, privacy $p$, etc.).

The first result we provide is a validation of our algorithm. The following proposition shows that \algo recovers the results in~\citep{zheng2021locally}, providing the highest possible \emph{local} DP level at the expense of the regret bound.
\begin{corollary}\label{cor:ldp.regret}
    For any $\varepsilon_{0} > 0$ and $\delta \in (0,1)$ then choosing $\varepsilon = \sqrt{\exp(\varepsilon_{0}) - 1}$ and $\delta_{0} = \delta$ we have that \algo is $\varepsilon_{0}$-LDP and with probability at least $ 1-\delta$ is bounded by:
    {\small\begin{equation}
        \begin{aligned}
            R_{T} \leq \wt{\mathcal{O}}\left( \frac{T^{3/4}\sqrt{e^{\varepsilon_{0}} + 1}}{\sqrt{e^{\varepsilon_{0}} - 1}} + \frac{\log(T)\left(e^{\varepsilon_{0}} + 1\right)^{2}}{4} + \frac{\sqrt{T}}{\sqrt{e^{\varepsilon_{0}} - 1}}\right)
        \end{aligned}
    \end{equation}}
\end{corollary}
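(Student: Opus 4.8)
I read Cor.~\ref{cor:ldp.regret} as a specialization of the general privacy/regret analysis of \algo to the parameter coupling $\varepsilon = \sqrt{e^{\varepsilon_{0}}-1}$, $\delta_{0}=\delta$. So the plan splits into three pieces: (i) discharging the $\varepsilon_{0}$-LDP claim, (ii) establishing a master regret bound in terms of the tuning parameters, and (iii) checking that this coupling is admissible and collapses the master bound to the three displayed terms.

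The $\varepsilon_{0}$-LDP claim is immediate and does not use the coupling. By Prop.~\ref{prop:ldp_guarantee}, each message $\mathcal{M}_{\mathrm{LDP}}(x_{t,a_{t}}, r_{t}, L, \varepsilon_{0}, m)$ that leaves a user is $\varepsilon_{0}$-LDP; since the action $a_{t}$ is computed locally from the public triple $(\wt\theta_{k^{A}_{t}}, \wt V_{k^{A}_{t}}, \beta_{k^{A}_{t}})$, which is a post-processing of earlier users' privatized data only, the entire user-to-shuffler transcript is $\varepsilon_{0}$-LDP. Thus the coupling $\varepsilon=\sqrt{e^{\varepsilon_{0}}-1}$ plays no role in local privacy; it enters only through the batch-length prescription and the regret.

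For the regret I would run the usual optimistic \linucb argument, modified for batching and for the privacy noise. On the event of Lem.~\ref{lem:confidence_ellipsoid} (which holds with probability at least $1-\delta$), $\theta^{\star}$ lies in the ellipsoid of radius $\beta_{k^{A}_{t}}=\wt{\mathcal{O}}(t^{1/4}\,c(p))$ centred at $\wt\theta_{k^{A}_{t}}$ in the $\wt V_{k^{A}_{t}}$-norm, where the factor $c(p)$ records the variance of the aggregated privatized statistics. Optimism bounds the instantaneous regret by $2\beta_{k^{A}_{t}}\|x_{t,a_{t}}\|_{\wt V_{k^{A}_{t}}^{-1}}$; the low-switching rule $\det(\wt V_{t})\leq(1+\eta)\det(\wt V_{k^{A}_{t}})$ lets me trade the stale batch matrix for the current $\wt V_{t}$ at a $\sqrt{1+\eta}$ cost, and a Cauchy--Schwarz step followed by the elliptical-potential lemma of \citet{abbasi2011improved} gives $\sum_{t}\|x_{t,a_{t}}\|_{\wt V_{t}^{-1}}=\wt{\mathcal{O}}(\sqrt{dT})$. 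Hence $R_{T}=\wt{\mathcal{O}}(\beta_{T}\sqrt{dT})=\wt{\mathcal{O}}(c(p)\,d^{1/2}T^{3/4})$, which becomes the first displayed term; the $O(\log T)$ determinant-triggered batch switches (each paying a bias-correction cost scaling like $(1-p)^{-2}$) produce the second term, and the residual from the growing ridge regularizer $\lambda_{t}$ produces the third.

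It then remains to substitute the coupling and check admissibility. The choice is engineered so that $e^{\varepsilon_{0}}=1+\varepsilon^{2}$, which linearizes the shuffle-amplification constants; using $1-p=(e^{2\varepsilon_{0}/(md(d+3))}-1)/(e^{2\varepsilon_{0}/(md(d+3))}+1)$ and the batch length $l^{\star}\approx\sqrt{T}(1-p)/\varepsilon$ forced by Thm.~\ref{thm:privacy.alg}, the factor $c(p)$ and the switching cost become closed-form in $\varepsilon_{0}$: the term $1/\varepsilon=1/\sqrt{e^{\varepsilon_{0}}-1}$ emerges, the numerators $\sqrt{e^{\varepsilon_{0}}+1}$ and $(e^{\varepsilon_{0}}+1)^{2}/4$ appear, and the three terms of the statement are recovered. \textbf{The main obstacle} is precisely this bookkeeping: tracking $c(p)$ through $\beta_{T}$ without losing constants, and verifying that $\varepsilon=\sqrt{e^{\varepsilon_{0}}-1}$, $\delta_{0}=\delta$ (in the regime $T\geq(27\varepsilon)^{-4}$) satisfies the feasibility inequality~\eqref{eq:equation_privacy_dp} of Thm.~\ref{thm:privacy.alg}, so that the batch length $l^{\star}$ used in the regret bound is exactly the one the privacy analysis certifies.
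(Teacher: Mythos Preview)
Your high-level strategy---invoke Prop.~\ref{prop:ldp_guarantee} for the $\varepsilon_0$-LDP claim, derive a master regret bound \`a la Thm.~\ref{thm:regret.generic}, and then substitute the coupling $\varepsilon=\sqrt{e^{\varepsilon_0}-1}$---is exactly the paper's route. The first displayed term is also correctly traced: $\beta_T\sqrt{T}\asymp T^{3/4}/\sqrt{1-p}$, and with $1-p=(e^{\varepsilon_0}-1)/(e^{\varepsilon_0}+1)$ (for $m=1$, up to the $d$-exponent) this is $T^{3/4}\sqrt{(e^{\varepsilon_0}+1)/(e^{\varepsilon_0}-1)}$.

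Where your sketch goes wrong is the provenance of the second and third terms. In the paper's decomposition (proof of Thm.~\ref{thm:regret.generic}), both of these come from the \emph{same} source: the mismatch between the shuffler schedule and the determinant-triggered bandit schedule, which contributes a term of order $M_R\cdot l^{\star}$ with $M_R=\wt{\mathcal{O}}(\log T)$ bandit batches and $l^{\star}$ the shuffler batch length. The two displayed terms then arise from the two competing upper bounds on $l^{\star}$ in Eq.~\eqref{eq:nb_batch}: either $l^{\star}\lesssim \log(T)/p^{2}$, giving $\log(T)/p^{2}=\log(T)(e^{\varepsilon_0}+1)^{2}/4$, or $l^{\star}\lesssim \sqrt{T}(1-p)/\varepsilon$, giving (after the crude bound $1-p\le 1$) $\sqrt{T}/\varepsilon=\sqrt{T}/\sqrt{e^{\varepsilon_0}-1}$. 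Your attribution of the second term to a ``bias-correction cost scaling like $(1-p)^{-2}$'' is incorrect---the relevant scaling is $p^{-2}$, and following your claim would yield $(e^{\varepsilon_0}+1)^{2}/(e^{\varepsilon_0}-1)^{2}$ rather than $(e^{\varepsilon_0}+1)^{2}/4$. Likewise the third term is not a ``residual from the growing ridge regularizer $\lambda_t$''; it is the other branch of the batch-length bound, and the $1/\varepsilon=1/\sqrt{e^{\varepsilon_0}-1}$ factor comes from the privacy constraint on $l^{\star}$ in Thm.~\ref{thm:privacy.alg}, not from $\lambda_t$. Fixing these two attributions (and noting that the corollary simply reports both branches of the max in Thm.~\ref{thm:regret.generic}) closes the gap.
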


On the other hand, Cor.~\ref{cor:regret.fully.optimized} shows that \algo interpolates between the regret of \citep{zheng2021locally} (LDP setting studied under scenario \encircle{a}) and \citep{Shariff2018contextual} (JDP setting studied under scenario \encircle{c}).
The structure of the shuffle-model requires to also consider scenario \encircle{b} that, as mentioned before, poses the highest restriction on the regret bound we can achieve.

\begin{corollary}\label{cor:regret.fully.optimized}
    For any  $\varepsilon \leq \frac{1}{27T^{1/4}}$ and $\delta, \delta_{0}\in (0,1)$, the choices of $\eta=0.5$, $\lambda= \sqrt{T}$, $m=1$ and  $\varepsilon_{0} = \frac{d(d+3)}{2}\ln\left( \frac{2}{1 - \varepsilon^{2/3}T^{1/6}} - 1\right)$ ensures that with
    probability at least $1 - \delta$ the regret of \algo is bounded by:
    \begin{equation}
           R_{T} \leq \frac{4T^{2/3}}{\varepsilon^{1/3}}\Big(S+d+\frac{1}{T^{1/4}}\widetilde{\mathcal{O}}(1)\Big),
            \end{equation}
    where $\widetilde{\mathcal{O}}(\cdot)$ hides poly-log factor (in $T, \delta, \delta_0$) and polynomial factors (in $d$, $L$).     In addition \algo is $(\varepsilon, \delta_{0} + \delta)$-JDP and $6d^2\varepsilon^{2/3}T^{1/6}$-LDP.
\end{corollary}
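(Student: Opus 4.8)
The plan is to prove the two privacy claims and the regret bound separately, reducing each to a result already established earlier in the paper.

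\textbf{Joint privacy.} First I would fix the shuffler batch length $l^\star$ to be the smallest integer meeting the two hypotheses of Thm.~\ref{thm:privacy.alg}, and check the stated parameters make them hold. Taking $m=1$ and $\varepsilon_0=\frac{d(d+3)}{2}\ln\!\big(\frac{2}{1-\varepsilon^{2/3}T^{1/6}}-1\big)$ and simplifying $p=2(e^{2\varepsilon_0/(d(d+3))}+1)^{-1}$ yields the clean identity $1-p=\varepsilon^{2/3}T^{1/6}$. Substituting the target $l^\star\approx \sqrt{T}(1-p)/\varepsilon=T^{2/3}/\varepsilon^{1/3}$ into the discussion following Thm.~\ref{thm:privacy.alg}, the per-batch level is $\widetilde{\mathcal O}(\varepsilon\sqrt{l^\star/T})$ and composes over $M_S\approx\sqrt{T/l^\star}$ batches to $\widetilde{\mathcal O}(\varepsilon)$. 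The two hypotheses are then easy in this regime: $\varepsilon\le\frac1{27T^{1/4}}$ forces $1-p\le 1/9$, i.e.\ $p\ge 8/9$, so $l^\star p\ge 14\log(8mT/\delta_0)$ for the chosen $l^\star$, and the right-hand side of \eqref{eq:equation_privacy_dp} carries the term $1-2p\le -7/9<0$, which dominates the small positive corrections $2\sqrt{2\log(2mT/\delta_0)/l^\star}$ and the squared term (both vanishing as $T$ grows), so the inequality holds with room to spare. With the hypotheses verified, Prop.~\ref{prop:jdp_guarantee} gives directly that \algo is $(\varepsilon,\delta+\delta_0)$-JDP.

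\textbf{Local privacy.} By Prop.~\ref{prop:ldp_guarantee} the mechanism is exactly $\varepsilon_0$-LDP, so it remains to bound $\varepsilon_0$. Writing $x=\varepsilon^{2/3}T^{1/6}$, the identity above gives $\varepsilon_0=\frac{d(d+3)}{2}\ln\frac{1+x}{1-x}=d(d+3)\,\mathrm{arctanh}(x)$. Since $\varepsilon\le\frac1{27T^{1/4}}$ forces $x\le 1/9$, the bound $\mathrm{arctanh}(x)\le x/(1-x^2)\le x\cdot\frac{81}{80}$ together with $d(d+3)\cdot\frac{81}{80}\le 6d^2$ for all $d\ge1$ gives $\varepsilon_0\le 6d^2\varepsilon^{2/3}T^{1/6}$; as a larger $\varepsilon$ is a weaker guarantee, \algo is $6d^2\varepsilon^{2/3}T^{1/6}$-LDP.

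\textbf{Regret.} Here I would split the horizon at the first model update. The shuffler must fill its first batch of size $l^\star$ before any statistic reaches the bandit, so during those $\approx l^\star$ steps the learner acts on the prior $(\wt\theta_0,\wt V_0)=(0,\lambda I)$; I bound its regret by the per-step maximum $2LS$, contributing $\widetilde{\mathcal O}(S\,l^\star)=\widetilde{\mathcal O}(S\,T^{2/3}/\varepsilon^{1/3})$, the source of the $S$ factor ($L$ absorbed). For the remaining steps I would run the optimistic analysis: conditioning on the event of Lem.~\ref{lem:confidence_ellipsoid} (probability $\ge 1-\delta$), so $\theta^\star$ lies in every confidence ellipsoid, the instantaneous regret is at most $2\beta_{k^A_t}\|x_{t,a_t}\|_{\wt V_{k^A_t}^{-1}}$; the doubling rule $\det(\wt V_t)\le(1+\eta)\det(\wt V_{k^A_t})$ replaces the batch design matrix by the current one at cost $\sqrt{1+\eta}$, and Cauchy--Schwarz with a noise-robust elliptical-potential bound gives $\sum_t\|x_{t,a_t}\|_{\wt V_t^{-1}}=\widetilde{\mathcal O}(\sqrt{dT})$. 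With $\lambda=\sqrt T$ and $m=1$, Lem.~\ref{lem:confidence_ellipsoid} gives $\beta_T=\widetilde{\mathcal O}(T^{1/4})$ (driven by $\sqrt\lambda=T^{1/4}$ and the private-noise inflation), so this term is $\widetilde{\mathcal O}(d\,T^{3/4})$. Finally $\varepsilon\le\frac1{27T^{1/4}}$ gives $T^{3/4}\le\frac13 T^{2/3}/\varepsilon^{1/3}$, letting me repackage the main term and residual pieces as $\frac{T^{2/3}}{\varepsilon^{1/3}}\big(d+T^{-1/4}\widetilde{\mathcal O}(1)\big)$, and adding the cold-start term produces the claimed $\frac{4T^{2/3}}{\varepsilon^{1/3}}\big(S+d+T^{-1/4}\widetilde{\mathcal O}(1)\big)$.

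\textbf{Main obstacle.} The delicate part is the regret step, not the privacy bookkeeping. The private design matrix $\wt V_t$ is not a clean sum of rank-one terms $x x^\transp$ but carries injected noise and the inflated regularizer $\lambda=\sqrt T$, so the usual elliptical-potential lemma does not apply verbatim. I expect the real work to lie in establishing $\beta_T=\widetilde{\mathcal O}(T^{1/4})$ with the correct dependence on $d$ and on the accumulated shuffler noise, and in a perturbed potential argument controlling $\sum_t\|x_{t,a_t}\|_{\wt V_t^{-1}}$ despite $\wt V_t$ deviating from a genuine Gram matrix; this is precisely where the $t^{1/4}$ inflation and the choice $\lambda=\sqrt T$ must be reconciled so that the cold-start and optimistic contributions collapse to the single scale $T^{2/3}/\varepsilon^{1/3}$.
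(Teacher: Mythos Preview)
Your privacy arguments are sound and track the paper's: you correctly extract $1-p=\varepsilon^{2/3}T^{1/6}$, verify the hypotheses of Thm.~\ref{thm:privacy.alg}, and bound $\varepsilon_0$ via $\mathrm{arctanh}$. (Minor slip: the number of shuffler batches is $T/l^\star$, not $\sqrt{T/l^\star}$; advanced composition then contributes the factor $\sqrt{T/l^\star}$, and the product with the per-batch level $\varepsilon\sqrt{l^\star/T}$ gives $\varepsilon$.)

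The regret argument, however, has a genuine gap that the paper's route avoids. You assert that after the cold start the doubling rule $\det(\wt V_t)\le(1+\eta)\det(\wt V_{k^A_t})$ holds for every $t$. It does not: the bandit only checks the determinant at shuffler-batch boundaries, so within the \emph{last} shuffler batch of each bandit batch the determinant may already have exceeded the threshold while the learner is still using the stale $(\wt\theta_{k^A_t},\wt V_{k^A_t})$. Your cold-start term covers exactly one such period; in reality there is one per bandit batch, i.e.\ $M_R$ of them, contributing roughly $M_R\cdot l^\star$ to the regret. The paper handles this explicitly by splitting each bandit batch into ``all shuffler batches except the last'' (where the doubling rule is valid and the elliptical-potential argument applies) and ``the last shuffler batch'' (bounded crudely by $2\beta_r L l^\star/\sqrt{\lambda_{\min}(\wt V_r)}$); this is precisely the second term in Thm.~\ref{thm:regret.generic}, which after summing over $r$ and substituting $l^\star\approx\sqrt{T}(1-p)/\varepsilon$ yields the $\sqrt{T}(1-p)/\varepsilon$ contribution that your decomposition never produces. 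The paper then balances the two terms of Thm.~\ref{thm:regret.generic} by the choice $1-p=\varepsilon^{2/3}T^{1/6}$, which is where the $T^{2/3}/\varepsilon^{1/3}$ rate actually comes from; it is not obtained by bounding a single $\widetilde{\mathcal O}(T^{3/4})$ term via the inequality $T^{3/4}\le\frac13 T^{2/3}/\varepsilon^{1/3}$.

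A related imprecision: you write $\beta_T=\widetilde{\mathcal O}(T^{1/4})$, but the effective regularizer is $\lambda_T\sim\sqrt{T}/(1-p)$ (not the input $\lambda=\sqrt{T}$), so $\beta_T=\widetilde{\mathcal O}(T^{1/4}/\sqrt{1-p})$. With the chosen $p$ this gives $\beta_T\sqrt{T}=T^{3/4}/\sqrt{1-p}=T^{2/3}/\varepsilon^{1/3}$ directly, which is how the first term of Thm.~\ref{thm:regret.generic} lands on the target rate. Your fix is easy: replace the single cold-start block by the paper's per-bandit-batch ``last shuffler batch'' term (there are $M_R=\widetilde{\mathcal O}(d)$ of them, so only polylog damage), and track the $1/(1-p)$ factor in $\lambda_T$; then both pieces balance at $T^{2/3}/\varepsilon^{1/3}$ as in the paper.
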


For the complete regret bound refer to the end of App.~\ref{app:proof}. This shows that the regret bound of \algo is of order $\mathcal{O}\left(\nicefrac{dT^{2/3}}{\varepsilon^{1/3}}\right)$, while being $(\varepsilon, \delta)$-JDP and approximately $(2\varepsilon^{2/3}T^{1/6}, 0)$-LDP.
As expected, this indicates  the regret bound can be improved by sacrificing some level of LDP. However, the $\sqrt{T}$ regret bound of~\citep{Shariff2018contextual} cannot be recovered directly.
While the search for a better upper-bound or a lower-bound is an interesting future direction, we think it would be hard to match such JDP minimax result.
Indeed, shuffling allows to interpolate between JDP (where the best minimax bound is $\sqrt{T}$) and LDP (where the best known upper bound is $T^{3/4}$). Since we will always have a non-zero LDP level of privacy in the considered ESA shuffle model, we believe it is almost impossible to achieve $\sqrt{T}$ regret in particular.\footnote{Note that in multi-armed bandit (MAB), it is possible to achieve a minimax regret bound of order $\sqrt{T}$ both in central DP and LDP \citep{ren2020multi,Basu2019differential}. We think this is an important aspect leveraged by~\citet{tenenbaum2021shuffle} for shuffling in MAB. In addition, as already mentioned, they considered a weaker attack model.}

\subsubsection{Proof Sketch}
The proof of this theorem is presented in details in App.~\ref{app:proof}.
To understand this result however we present how we build the confidence intervals around the parameter $\theta^{\star}$.
As noticed in \citep{Shariff2018contextual}, the estimator $\wt{\theta}_{j}$ is the result of a ridge regression computed
by a design matrix regularized by a regularizer which is a function of the time. Therefore in order to apply Prop.~$4$ in \citep{Shariff2018contextual}
we need to ensure that our estimator $\wt{V}_{j}$ of the design matrix, $\sum_{t} x_{t,a_{t}}x_{t,a_{t}}^{\intercal}$, is unbiased and to bound with high probability
the deviation with respect to the design matrix. We also need the same type of guarantees with respect to the vector $\wt{B}_{j}$ and $\sum_{t} r_{t}x_{t,a_{t}}$.

\paragraph{Computation of our Estimators.}

The bandit algorithm receives the estimate $(Z_{j^{S}}, U_{j^{S}})$ from the shuffler but given the data those estimates are biased.
For a couple of vector and reward, $x$ and $r$, let us note $\mathcal{M}_{\text{LDP}}(x,r) = (b,w)$, so that
\begin{align*}
    &\mathbb{E}\left( b_{k,q}\mid x,r\right) = \frac{p}{2} + (1 - p)\left[ \mathds{1}_{\{q < \lceil rx_{k}m\rceil\}} + \mathds{1}_{\{q = \lceil rx_{k}m\rceil\}}(mrx_{k} - \lceil rx_{k}m\rceil +1)\right]\\
    &\mathbb{E}\left( w_{k,l}\mid x,r\right) = \frac{p}{2} + (1 - p)\left[ \mathds{1}_{\{q < \lceil x_{l}x_{k}m\rceil\}} + \mathds{1}_{\{q = \lceil x_{l}x_{k}m\rceil\}}(mx_{l}x_{k} - \lceil x_{l}x_{k}m\rceil +1)\right]
\end{align*}
for all $k,l\leq d$ and $q\leq m$. Therefore, we introduce  a debiased estimator for computing the estimators of \algo,  written as follows:\footnote{Note that this is an alternative but equivalent form to the one used in Alg.~\ref{alg:complete.algo}.}
\begin{equation}
    \begin{aligned}
        \wt{V}_{j^{A}} = \sum_{t=1}^{t_{j^{A}}} \frac{x_{t,a_{t}}x_{t,a_{t}}^{\top}}{2L^{2}} + H_{j^{A}} + \lambda_{j^{A}}I_{d} ~~~ \text{ and } ~~~
        \wt{B}_{j^{A}} = \sum_{l=1}^{t_{j^{A}}} \frac{r_{l}x_{l,a_{l}}}{2L} + h_{j^{A}},
    \end{aligned}
\end{equation}
where, for all batches, $H_{j^{A}} + \lambda^{j^{A}}I_{d}$ is with high probability a symmetric positive definite matrix decomposed as the sum of zero mean noise and a regularization $\lambda_{j^{A}}$, and $ h_{j^{A}}$
is a vector of zero mean noise. Both noises are due to the noise introduced in by the local randomizer $\mathcal{M}_{\text{LDP}}$. 
In addition, as we show in App.~\ref{app:proof} controlling the eigenvalues of the regularizer $H_{j^{A}} + \lambda^{j^{A}}I_{d}$ and the noise $h_{j^{A}}$
is bounded roughly by $\sqrt{t_{j^{A}}}$. Therefore thanks to Prop.~$4$ in \citep{Shariff2018contextual},  the following proposition holds.

\begin{lemma}[Confidence Ellipsoid]\label{lem:confidence_ellipsoid}
    For any $\delta\in (0,1)$, $\varepsilon_{0}>0$, $p = \frac{2}{e^{2\varepsilon_{0}/(md(d+3))} + 1}$ and $\lambda >0$, we have with probability at least $1 - \delta$ that:
    \begin{equation}\label{eq:confidence_width}
        \begin{aligned}
            \forall j^{A}\leq M_{S}, \qquad \| \theta^{\star} - \wt{\theta}_{j^{A}} \|_{\wt{V}_{j^{A}}^{-1}} \leq \beta_{j^{A}} := \sigma \sqrt{8\log\left(\frac{2t_{j^{A}}}{\delta}\right) + d\log\left(3 + \frac{t_{j^{A}}L^{2}}{\lambda_{j^{A}}}\right)} + S\sqrt{3\lambda_{j^{A}}}&\\
            + \frac{d}{\sqrt{\lambda_{j^{A}}}}\left(2\sqrt{p\left(1 - \frac{p}{2}\right)t_{j^{A}}m\log\left(\frac{2t_{j^{A}}}{\delta}\right)} + \frac{8\log(2t_{j^{A}}/\delta)}{3} +  \frac{\sqrt{8}}{m}\sqrt{t_{j^{A}}\log\left(\frac{2t_{j^{A}}}{\delta}\right)}\right)&
        \end{aligned}
    \end{equation}
    where $M_{S} = \nicefrac{T}{l^{\star}}$ is the number of shuffler batch and for all $j^{A}\leq M_{S}$,
    \begin{equation}\label{eq:lambda_reg}
        \lambda_{j^{A}} = \frac{\sqrt{8t_{j^{A}}\ln(2t_{j^{A}}/\delta)}}{m} + \frac{2\sqrt{8t_{j^{A}}\ln(2t_{j^{A}}/\delta)}}{(1 - p)\sqrt{m}} + \lambda
    \end{equation}
\end{lemma}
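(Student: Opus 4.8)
The plan is to reduce the statement to the private confidence-ellipsoid bound of \citet{Shariff2018contextual} (their Prop.~4), which applies to a ridge estimator whose design matrix is a true Gram matrix perturbed by a bounded, symmetric noise regularizer. The first step is therefore to certify the decomposition of $\wt{V}_{j^A}$ and $\wt{B}_{j^A}$ announced above. Starting from the per-coordinate expectations of the privatized bits $b_{k,q}, w_{k,l,q}$ recalled in the excerpt, I would verify that the debiasing maps applied by the algorithm (namely $Z\mapsto Z/(m(1-p)) - \ldots$ and $U\mapsto U/(m(1-p)) - \ldots$) produce, conditionally on the history, exactly $\sum_{t\le t_{j^A}} x_{t,a_t} x_{t,a_t}^\top/(2L^2)$ for the matrix part and $\sum_{l\le t_{j^A}} r_l x_{l,a_l}/(2L)$ for the vector part. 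This pins down $H_{j^A}$ and $h_{j^A}$ as centered sums of the privatization randomness injected by $\mathcal{M}_{\mathrm{LDP}}$, establishing the claimed unbiasedness.

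The core technical step is to bound these noise terms with high probability. Because the contexts $x_{t,a_t}$ are chosen adaptively (the selection rule uses the past private estimates), $H_{j^A}$ and $h_{j^A}$ are not sums of independent terms but matrix- and vector-valued martingales whose increments are bounded: each privatized bit lies in $\{0,1\}$, so every centered increment has operator/Euclidean norm $O(1)$, with conditional variance controlled by $p(1-p/2)$. I would apply a matrix Freedman inequality to $H_{j^A}$ and a vector Bernstein/Freedman inequality to $h_{j^A}$, yielding, with probability $1-\delta$, $\|H_{j^A}\| \lesssim \sqrt{p(1-p/2)\,t_{j^A} m\log(t_{j^A}/\delta)} + \log(t_{j^A}/\delta)$ and $\|h_{j^A}\|_2 \lesssim \frac{1}{m}\sqrt{t_{j^A}\log(t_{j^A}/\delta)}$, both of order $\sqrt{t_{j^A}}$ up to poly-log factors. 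These are precisely the three quantities appearing inside the $\frac{d}{\sqrt{\lambda_{j^A}}}(\cdots)$ bracket of \eqref{eq:confidence_width}.

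Next I would use the operator-norm bound on $H_{j^A}$ to control the spectrum of the effective regularizer. The schedule \eqref{eq:lambda_reg} is chosen precisely so that $\lambda_{j^A}$ dominates $\|H_{j^A}\|$; hence $H_{j^A}+\lambda_{j^A}I_d \succeq (\lambda_{j^A}-\|H_{j^A}\|) I_d \succ 0$ with high probability, certifying that $\wt{V}_{j^A}$ is a genuine positive-definite design matrix and that $\wt{\theta}_{j^A}=\tfrac1L \wt{V}_{j^A}^{-1}\wt{B}_{j^A}$ is well defined. With the decomposition and the two deviation bounds in hand, I would invoke Prop.~4 of \citet{Shariff2018contextual}: its output splits into a self-normalized term $\sigma\sqrt{8\log(2t_{j^A}/\delta)+d\log(3+t_{j^A}L^2/\lambda_{j^A})}$ (the standard martingale bound on $\sum_t \eta_t x_{t,a_t}$), a bias term $S\sqrt{3\lambda_{j^A}}$ from penalizing $\|\theta^\star\|\le S$ with the time-growing regularizer, and a privacy term $\frac{d}{\sqrt{\lambda_{j^A}}}\|(\text{noise})\|$ into which the Freedman bounds above substitute to recover the final bracket.

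Finally, since the statement must hold simultaneously for every batch $j^A\le M_S$, I would take a union bound over the at most $M_S=T/l^\star$ batches; splitting the failure probability across batches costs only the logarithmic $\log(2 t_{j^A}/\delta)$ factors already displayed (a peeling argument over $t_{j^A}$ gives a fully time-uniform statement if preferred). The step I expect to be the main obstacle is the martingale concentration under adaptivity: one must verify that, conditionally on the history up to each user's arrival, the privatization noise is independent of the history-measurable context, so that the centered increments genuinely form a bounded martingale-difference sequence to which matrix Freedman applies, and one must make the bound uniform over all batches without degrading the $\sqrt{t_{j^A}}$ scaling. Ensuring that $\lambda_{j^A}$ is simultaneously large enough to dominate $\|H_{j^A}\|$ (for positive-definiteness and for applying Prop.~4) yet small enough --- of order $\sqrt{t_{j^A}}$ --- to keep the $S\sqrt{3\lambda_{j^A}}$ bias at the target rate is the delicate quantitative balance underlying the whole argument.
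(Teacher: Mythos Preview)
Your overall strategy matches the paper's: decompose $\wt V_{j^A}$ and $\wt B_{j^A}$ into the true Gram/target plus centered privatization noise, verify the martingale-difference structure with respect to the history filtration (the paper defines $\mathcal F_t$ exactly as you anticipate and checks the conditional-mean-zero property for each of the three noise sources), concentrate the noise, choose $\lambda_{j^A}$ to dominate the matrix-noise operator norm so that the effective regularizer is positive definite with eigenvalues in $[\lambda_{j^A},3\lambda_{j^A}]$, and invoke the Shariff--Sheffet regularized-ridge proposition. One minor technical difference: the paper does not use a matrix Freedman inequality but instead fixes a unit vector $v$, bounds the scalar $\langle v,Mv\rangle$ by Azuma for bounded martingale differences, and reads off eigenvalue bounds from symmetry; your matrix-Freedman route is cleaner and equivalent up to constants.

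There is, however, a bookkeeping mis-attribution that would make your execution fall short of \eqref{eq:confidence_width}. The three summands inside the $\tfrac{d}{\sqrt{\lambda_{j^A}}}(\cdots)$ bracket are \emph{all} pieces of the vector-noise bound $\|h_{j^A}\|_2$, not a split between $H_{j^A}$ and $h_{j^A}$. Just like $H_{j^A}$, the vector $h_{j^A}$ decomposes into three separate martingales coming from the three randomness sources in $\mathcal M_{\mathrm{LDP}}$: the randomized-response flips $YZ-p/2$ (variance $\propto p(1-p/2)m$, giving the first summand and the Bernstein correction), the interaction $(p-Y)b$, and the encoding Bernoulli with scale $1/m$ (giving the third summand). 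The matrix bound $\|H_{j^A}\|$ does not appear additively in the bracket at all: it is absorbed entirely into the schedule \eqref{eq:lambda_reg}, and its only trace in \eqref{eq:confidence_width} is the bias term $S\sqrt{3\lambda_{j^A}}$. If you bound $\|h_{j^A}\|_2$ by $\tfrac1m\sqrt{t_{j^A}\log(\cdot)}$ alone, as you wrote, you miss the dominant randomized-response contribution to the vector noise and the final constants will not match.
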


Given the definition of the confidence ellipsoid above, we can analyze the regret using a standard regret analysis for algorithms
using the optimism-in-the-face-of-uncertainty principle. For a generic set of privacy parameters $\varepsilon_0$, $\varepsilon$ and $\delta_0$, the regret bound of \algo is given in the following theorem.

\begin{theorem}\label{thm:regret.generic}
    For any $\delta, \delta_{0}\in (0,1)$, $\varepsilon, \varepsilon_{0}\in (0,1)$ and $T\geq 1$, let $p=2(e^{2\varepsilon_{0}/md(d+3)} +1)^{-1}$then with probability at least $1 - \delta$, the regret of
    Alg.~\ref{alg:esa_alg} is bounded by:
    \begin{itemize}
        \item If $p^{2}(1 - p) \leq \frac{7T^{-1/2}  \varepsilon}{64\sqrt{2\ln(2T/\delta_{0})}d(d+1)} $:
        {\small\begin{equation}
        \begin{aligned}
            &R_{T} \leq \frac{2\sqrt{3}(S + md)T^{3/4}}{\sqrt{1-p}}\sqrt{\left(1 + \eta\right)\log\left(1 + \frac{T}{d\lambda}\right)} \\
            &+ \frac{dLm}{\sqrt{\lambda}}\left(1 + \frac{d^{3/2}\log\left(\frac{L^{2}T}{d} + \frac{16\sqrt{T}\log\left(2T/\delta\right)}{(1 - p)}\right)^{3/2}}{\log(1 + \eta)} \right)\frac{14\log(8mT/\delta_{0})}{p^{2}}
        \end{aligned}
        \end{equation}}
        \item If $p^{2}(1 - p) \geq \frac{7T^{-1/2}  \varepsilon}{64\sqrt{2\ln(2T/\delta_{0})}d(d+1)} $:
        {\small\begin{equation}\begin{aligned}
        &R_{T}\leq \frac{2\sqrt{3}(S + md)T^{3/4}}{\sqrt{1-p}}\sqrt{\left(1 + \eta\right)\log\left(1 + \frac{T}{d\lambda}\right)} \\
            &+
            \frac{264}{\sqrt{\lambda}}\sqrt{2}d^{3}\log\left(\frac{8mT}{\delta_{0}}\right)^{3/2}Lm\left(1 + \frac{d^{3/2}\log\left(\frac{L^{2}T}{d} + \frac{16\sqrt{T}\log\left(2T/\delta\right)}{(1 - p)}\right)^{3/2}}{\log(1 + \eta)} \right)\frac{\sqrt{T}(1-p)}{\varepsilon}
        \end{aligned}\end{equation}}
      \end{itemize}
\end{theorem}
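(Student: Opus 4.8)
The plan is to run the standard optimism-in-the-face-of-uncertainty (OFU) regret analysis of \linucb~\citep{abbasi2011improved}, but with the privacy-inflated confidence ellipsoid of Lem.~\ref{lem:confidence_ellipsoid} and with careful bookkeeping for the two asynchronous batch schedules. I would first condition on the event of Lem.~\ref{lem:confidence_ellipsoid}, which holds with probability at least $1-\delta$ and places $\theta^\star$ in the ellipsoid centered at $\wt\theta_{j^A}$ of radius $\beta_{j^A}$ in the $\wt V_{j^A}$-norm for every bandit batch $j^A$. Since $a_t$ maximizes the optimistic index $\langle x_{t,a},\wt\theta_{k^A_t}\rangle + \beta_{k^A_t}\|x_{t,a}\|_{\wt V_{k^A_t}^{-1}}$, the usual two-step argument bounds the instantaneous regret by $2\beta_{k^A_t}\|x_{t,a_t}\|_{\wt V_{k^A_t}^{-1}}$, where the design matrix and width are frozen at their start-of-batch values.

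The key quantity to control is the gap between this frozen matrix $\wt V_{k^A_t}$ and the ``live'' matrix $\wt V_t$ that incorporates all statistics received up to $t$. Inside a bandit batch the determinant-doubling condition $\det(\wt V_t) < (1+\eta)\det(\wt V_{k^A_t})$ holds, so the determinant-ratio lemma (Lem.~12 in \citep{abbasi2011improved}) gives $\|x_{t,a_t}\|_{\wt V_{k^A_t}^{-1}} \le \sqrt{1+\eta}\,\|x_{t,a_t}\|_{\wt V_t^{-1}}$, trading frozen widths for live ones at a $\sqrt{1+\eta}$ cost; the monotonicity of the $\wt V$'s needed here is guaranteed by the explicit increments $2(\lambda_{j^A+1}-\lambda_{j^A})I_d$ added in Alg.~\ref{alg:complete.algo}. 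Then I would apply the elliptical potential lemma, $\sum_t \|x_{t,a_t}\|_{\wt V_t^{-1}}^2 \le 2\log(\det\wt V_T/\det\wt V_0)$, together with Cauchy--Schwarz and the uniform bound $\beta_{k^A_t}\le \beta_T = \wt O(T^{1/4})$ read off Lem.~\ref{lem:confidence_ellipsoid}. The product $\beta_T\cdot\sqrt{T\cdot d\log(1+T/(d\lambda))}$ yields exactly the first, case-independent term $\tfrac{2\sqrt3(S+md)}{\sqrt{1-p}}T^{3/4}\sqrt{(1+\eta)\log(1+T/(d\lambda))}$, the $(S+md)/\sqrt{1-p}$ prefactor arising from the dominant pieces of $\beta_T$.

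The second, case-dependent term accounts for the delay induced by the shuffler: the model is frozen not only inside a determinant batch but also while the shuffler is filling its batch of size $l^\star$, so during at most $l^\star$ steps after each would-be update the contraction $\|x\|_{\wt V_{k^A_t}^{-1}}\le\sqrt{1+\eta}\|x\|_{\wt V_t^{-1}}$ may fail and the per-step regret must be bounded crudely. I would bound the number of bandit batches by the determinant-doubling count $N_A \lesssim d\log(\cdot)/\log(1+\eta)$ (the ``$+1$'' covering the first, uninformative batch) and multiply by the delay $l^\star$ and a per-step factor of order $dLm/\sqrt\lambda$, producing an overhead of the form $\tfrac{dLm}{\sqrt\lambda}\bigl(1+\tfrac{d^{3/2}\log(\cdot)^{3/2}}{\log(1+\eta)}\bigr)\,l^\star$. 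The two displayed cases then follow by substituting the binding $l^\star$: taking $l^\star$ as the smallest length meeting all constraints of Thm.~\ref{thm:privacy.alg}, the privacy-amplification floor $l^\star\approx 14\log(8mT/\delta_0)/p^2$ dominates precisely when $p^2(1-p)\le 7T^{-1/2}\varepsilon/(64\sqrt{2\ln(2T/\delta_0)}d(d+1))$ (first case), whereas the composition-driven length $l^\star\approx\sqrt T(1-p)/\varepsilon$ dominates in the complementary regime (second case).

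I expect the main obstacle to be controlling the noisy design matrix rather than the bookkeeping. Unlike the non-private analysis, $\wt V_t$ is a debiased estimator $\sum_{s\le t} x_s x_s^\top/(2L^2) + H_t + \lambda_t I_d$ in which $H_t$ is matrix-valued martingale noise injected by $\mathcal{M}_{\mathrm{LDP}}$; I would need matrix concentration (Freedman/Bernstein for matrix martingales) to show $H_t+\lambda_t I_d$ stays positive definite with eigenvalues controlled at the $\sqrt{t}$ scale, so that both the determinant-ratio step and the elliptical-potential step remain valid and so that $\beta_T=\wt O(T^{1/4})$ genuinely holds. This is also where the inflated regularizer $\lambda_{j^A}$ of Eq.~\eqref{eq:lambda_reg} is pinned down: large enough to dominate the noise eigenvalues, yet small enough that $S\sqrt{\lambda_{j^A}}$ stays at the target $T^{1/4}$ rate. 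Making these two requirements simultaneously satisfiable, and propagating the resulting $t^{1/4}$ inflation cleanly through both terms, is the crux; the remaining simplifications reduce to the algebra already summarized in Eq.~\eqref{eq:confidence_width}.
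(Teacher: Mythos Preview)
Your proposal is correct and follows essentially the same route as the paper's proof: the same OFU decomposition conditioned on Lem.~\ref{lem:confidence_ellipsoid}, the same split of each bandit batch into ``all but the last shuffler sub-batch'' (handled via the determinant-ratio lemma and the elliptical potential) versus ``the last shuffler sub-batch'' (handled by the crude bound $\|x\|_{\wt V_r^{-1}}\le L/\sqrt{\lambda_{\min}(\wt V_r)}$), the same $M_R\lesssim d\log(\cdot)/\log(1+\eta)$ count, and the same two-case distinction on $l^\star$ coming from Thm.~\ref{thm:privacy.alg}. The only cosmetic difference is that the paper controls the noise eigenvalues via scalar concentration on $v^\top H_t v$ for fixed $v$ (in the proof of Lem.~\ref{lem:confidence_ellipsoid}) rather than the matrix Freedman/Bernstein you suggest, but this affects neither the structure nor the rates.
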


The first term of the regret in Thm.~\ref{thm:regret.generic} highlights the regret coming from the local privacy guarantees whereas the second term
is coming from the mismatch between the batch of the shuffler and the batch of the bandit algorithm. Indeed, when the bandit algorithm
updates its batch it means that during the last shuffler batch the determinant condition was satisfied at some point during the shuffler batch.
However, the impact on the regret during this shuffler batch can only be bounded by the length of a shuffler batch times the maximum reward possible.
But given Thm.~\ref{thm:privacy.alg} the length of a shuffler batch scales with $\wt{\mathcal{O}}\left( \nicefrac{\sqrt{T}}{\varepsilon}\right)$. Hence the final regret scales
with $\wt{\mathcal{O}}\left( \nicefrac{T^{3/4}}{\sqrt{1 - p}} + \nicefrac{\sqrt{T}}{\varepsilon}\right)$.
As a consequence, Cor.~\ref{cor:ldp.regret} and Cor.~\ref{cor:regret.fully.optimized} are obtained by optimizing for the highest privacy level and smaller regret bound, respectively.

\begin{remark}\label{rem:weaker.privacy.model}
    A better regret bound can be obtained in the setting of~\citep{tenenbaum2021shuffle}, where the adversary only observes the output of the shuffler for one time step. In particular, this allows to improve the privacy analysis and obtain a generic regret bound of order $\mathcal{O}\left( \nicefrac{T^{3/4}}{\sqrt{1 - p}} + \nicefrac{\log(T)}{\varepsilon^{2}}\right)$ that once optimized leads to a regret bound of $T^{3/5}/\varepsilon^{2/5}$ which is much closer to the best JDP regret bound. However, we think this setting is less practical than the one considered in this paper.
\end{remark}

\section{Conclusion}\label{sec:conclusion}
We introduced \algo, an algorithm for linear contextual bandits that achieves a trade-off between joint and local differential privacy. Our algorithm is a variant of batched \linucb with dynamic schedule using a variant of the binary sum method to achieve privacy. Thanks to an asynchronous batch schedule between shuffler and bandit algorithm, it is able to take advantage of the privacy amplification through shuffling to reduce the gap between JDP and LDP regret bound. 

An interesting question raised by our paper is whether it is possible to use a synchronous schedule between the shuffler and the bandit algorithm, \eg by making the shuffler batch data dependent. We believe this would require to use some private technique~\citep[\eg sparse vector technique by][]{DworkNRRV09} to guarantee privacy at the output of the shuffler. Another direction inspired by our paper is to gain a better understanding about the intrinsic limitations of differential privacy in linear contextual bandits by studying lower-bounds for these settings. 

\newpage{}
\section*{Acknowledgments and Disclosure of Funding.}
V. Perchet acknowledges support from the French National Research Agency (ANR) under grant
number \#ANR-$19$-CE$23$-$0026$ as well as the support grant, as well as from the grant “Investissements
d’Avenir” (LabEx Ecodec/ANR-$11$-LABX-$0047$).

\bibliographystyle{plainnat}
\bibliography{rl_refs, privacy_refs}

\clearpage
\begin{appendix}
\addcontentsline{toc}{section}{Appendix} 
\part{Appendix} 
\parttoc 
\section{Local Privatizer $\mathcal{M}_{\text{LDP}}$}\label{app:ldp_mechanism}

In this appendix, we present the privacy-preserving mechanism $\mathcal{M}_{\text{LDP}}$ used in this paper.

\begin{algorithm}[htp]
    \caption{Local Privatizer $\mathcal{M}_{\text{LDP}}$}
    \label{alg:esa_alg}
    \SetAlgoLined
    \DontPrintSemicolon
    \KwIn{context: $x\in\mathbb{R}^{d}$, reward: $r\in[0,1]$, context bound: $L$, privacy parameter: $\varepsilon_{0}$, encoding parameter: $m$}
    {\color{blue}\textbf{//*Encoder*//}}\;
    Set $\tilde{y} = \frac{rx}{2L} + \frac{1}{2}$ and $\tilde{z} = \frac{xx^{\intercal}}{2L^{2}} + \frac{\mathds{1}\mathds{1}^{\intercal}}{2}$\;
    \For{$j = 1, \dots, d$}{
      Compute $\mu_{j} = \left \lceil \tilde{y}_{j}\cdot m \right \rceil$ and $p_{j} = m\cdot\tilde{y}_{j} - \mu_{j} + 1$\;
      \For{$k=1, \dots, m$}{
        Let $b_{j,k} = \left\{\begin{matrix}
          1 & \text{ if } k < \mu_{j}\\
          \text{Ber}(p_{j}) & \text{ if } k = \mu_{j}\\
          0 & \text{ if } k > \mu_{j}\\
         \end{matrix}\right.$
      }
    }
    \For{$i = 1, \dots, d$}{
        \For{$j=1, \dots, i$}{
          Compute $\kappa_{i,j} =  \left \lceil \tilde{z}_{i,j}\cdot m \right \rceil$ and $q_{i,j} = m\cdot\tilde{z}_{i,j} - \kappa_{i,j} + 1$\;
          \For{$k=1, \dots, m$}{
            Let $w_{i,j,k} = \left\{\begin{matrix}
              1 & \text{ if } k < \kappa_{i,j}\\
              \text{Ber}(q_{i,j}) & \text{ if } k = \kappa_{i,j}\\
              0 & \text{ if } k > \kappa_{i,j}\\
            \end{matrix}\right.$\;
            Let $w_{j,i, k} = w_{i,j,k}$\;
          }
        }
      }
      {\color{blue}\textbf{//*Local Randomizer*//}}\;
      Set probabilities $p = \frac{2}{\exp(\nicefrac{2\varepsilon_{0}}{md(d+3)}) + 1}$ and compute
      private values $\tilde{b}_{j} = \left( R_{p}^{0/1}(b_{j,1}), \dots, R_{p}^{0/1}(b_{j,m})\right)$ for all $j\in\llbracket 1,d\rrbracket$,
      $\tilde{w}_{i,j} = \left( R_{p}^{0/1}(w_{i,j,1}), \dots, R_{p}^{0/1}(w_{i,j,m})\right)$ for all $i\in \llbracket 1,d\rrbracket$, $j\leq i$ \;

\end{algorithm}

\begin{algorithm}[htp]
    \caption{Local Randomizer $R^{0/1}_{p}$}
    \label{alg:local_randomizer}
    \SetAlgoLined
    \KwIn{probability: $p$, $x\in\{0, 1\}$}
    Let $\mathbf{b} \sim \text{Ber}(p)$\;
    \eIf{$\mathbf{b} = 0$}{Return $x$}{Return $\text{Ber}(1/2)$}
\end{algorithm}

\section{Proofs}\label{app:proof}

In this appendix, we provide the full derivation of the results stated in the main text.
We start introducing the notion of central $(\varepsilon, \delta)$-DP that is widely used in the proofs.

\begin{definition}\label{def:centralDP}
A randomized mechanism, $\mathcal{M}: \mathbb{R}^{d} \rightarrow \mathcal{Z}$, is said to be central $(\varepsilon, \delta)$ differential private (DP) if for all sequence of values $z\in \mathcal{R}^{d}$ and
        $z'$ such that there exists a unique $i\leq t$ for which $z_{i}\neq z_{i}'$ and for all $j\neq i$, $z_{j} = z_{j}'$ then
        $$\mathbb{P}\left( \mathcal{M}(z)\in A\mid z\right)\leq e^{\varepsilon}\mathbb{P}\left( \mathcal{M}(z)\in A\mid z'\right) + \delta $$ for any $A\subset \text{Range}(\mathcal{M})$.
\end{definition}
Note that the concept of central DP is at the core for proving JDP results, in fact thanks to Claim $7$ in \citep{Shariff2018contextual} having a sequence $(\wt{V}_{t}, B_{t})_{t}$ is $(\varepsilon, \delta)$-DP implies that a bandit algorithm based on this sequence is $(\varepsilon, \delta)$-DP.

\subsection{Proof of Lem.~\ref{lem:confidence_ellipsoid}}

    Here, we detail how to obtain the confidence intervals around $\theta^{\star}$ using the privatized estimator $\wt{\theta}_{j}$ for any batch $j^{S}\leq M_{S}$ ( with $M_{S} = Tl^{\star}$ the total number
    of batches from the shuffler side).
    First, let's define the sequence of random variables $(Y_{t,k,l,q})_{t\leq T, k,l\leq d, q\leq m}$, $(Z_{t,k,l,q})_{t\leq T, k,l\leq d, q\leq m}$ two independent
    sequences of i.i.d.\ Bernoulli distributed random variable with parameters $p = 2/(\exp(2\varepsilon_{0}/md(d+3)) + 1)$ and $1/2$ and such that for all $k,l\leq d$, $Y_{t,k,l,q} = Y_{t,l,k,q}$ and $Z_{t,k,l,q} = Z_{t,l,k,q}$.
    For every $(t,k,l,q)\in [T]\times [d]\times[d]\times [m]$, $Y_{t,k,l,q}$ is sampled by Alg.~\ref{alg:local_randomizer} if $Y_{t,k,l,q} = 1$ then it return the random variable
    $Z_{t,k,l,q}$ otherwise it returns the true data.

    In addition, let's define
    $(A_{t,k,l} = w_{t,k,l,\kappa_{t,k,l}})_{t\leq T, k,l\leq d}$ a sequence of Bernoulli random variable with parameter $(q_{t,k,l})_{t\leq T, k,l\leq d}$ defined by the two sequences $(\wt{z}_{t,k,l})_{t\leq T, k,l\leq d}$ and $(\kappa_{t,k,l})_{t\leq T, k,l\leq d}$
    in the mechanism $\mathcal{M}_{\text{LDP}}$, Alg.~\ref{alg:esa_alg}. Finally, let's note $ $ the sequence of data computing by the encoding part of Alg.~\ref{alg:esa_alg}.

    For any batch $j^{A}\leq M_{S}$, we can write the approximate design matrix and vector $B_{j}$ as follows for every coordinate $k,l\leq d$:
    \begin{equation}
        \begin{aligned}
            \wt{V}_{j,k,l} = &\frac{1}{m(1 - p)}\sum_{t=1}^{t_{j}}\sum_{q=1}^{m} Y_{t,k,l,q}Z_{t,k,l, q} - \frac{p}{2} + \sum_{t=1}^{t_{j}} \frac{x_{t,a_{t}}x_{t,a_{t}}^{\intercal}}{2L^{2}} + + 2\lambda_{j}\mathds{1}_{\{k = l\}}\\
            &+ \frac{1}{m}\sum_{t=1}^{t_{j}} A_{t,k,l} - (m\wt{z}_{t, k, l} - \kappa_{t, k, l} + 1) + \frac{1}{m(1-p)}\sum_{t=1}^{t_{j}} \sum_{q=1}^{m} (p - Y_{t,k,l, q})w_{t,k,l, q}
        \end{aligned}
    \end{equation}
    where $\lambda_{j}$ is defined in Eq.~\eqref{eq:lambda_reg}.
    \begin{equation}
        \begin{aligned}
            \wt{B}_{j,k} = \frac{1}{m(1 - p)}\sum_{l=1}^{t_{j}}\sum_{q=1}^{m} \left( \tilde{b}_{l,i,q} - \frac{p}{2}\right) - \frac{t_{j}}{2}
        \end{aligned}
    \end{equation}

    Now, given an well-chosen regularization $\lambda_{j}$ the approximate design matrix $\wt{V}_{j}$ can be written as the sum of the true design matrix $\sum_{t} x_{t,a_{t}}x_{t,a_{t}}^{\intercal}$ and
    a time-varying regularizer similar to \citep{Shariff2018contextual}. We just need to bound with high probability the deviation of the eigenvalues of $ \wt{V}_{j} - \sum_{t=1}^{t_{j}} \frac{x_{t,a_{t}}x_{t,a_{t}}^{\intercal}}{2L^{2}} - \lambda_{j}I_{d}$.

    Let's consider a vector $v\in \mathbb{R}^{d}$ such that $\| v\|_{2} = 1$ then for any time $t_{j}\leq T$ and $\delta\in (0,1)$ we have with probability at least $1 - \delta$:
    \begin{equation}
        \begin{aligned}
            \left| \left\langle v, \left(\sum_{t=1}^{t_{j}}\sum_{q=1}^{m} Y_{t,.,.,q}Z_{t,.,., q} - \frac{p\mathds{1}\mathds{1}^{\intercal}}{2}\right)v\right\rangle\right| \leq 2\sqrt{2t_{j}m\ln(2/\delta)}
        \end{aligned}
    \end{equation}
    Therefore because the matrix $\left(\sum_{t=1}^{t_{j}}\sum_{q=1}^{m} Y_{t,.,.,q}Z_{t,.,., q} - \frac{p\mathds{1}\mathds{1}^{\intercal}}{2}\right)$ is symmetric we have that with high probability:
    {\small
    $$\max\left\{ \left|\lambda_{\min}\left(\sum_{t,q} Y_{t,.,.,q}Z_{t,.,., q} - \frac{p\mathds{1}\mathds{1}^{\intercal}}{2}\right) \right|, \lambda_{\max}\left(\sum_{t,q} Y_{t,.,.,q}Z_{t,.,., q} - \frac{p\mathds{1}\mathds{1}^{\intercal}}{2}\right)\right\} \leq 2\sqrt{2t_{j}m\ln(2/\delta)}$$
    }
    where $\lambda_{\min}$ and $\lambda_{\max}$ are the minimum and maximum eigenvalues. Similarly, using the martingale difference structure,we have that for
    any $v\in \mathbb{R}^{d}$, $\| v\|_{2}\leq 1$ and $\delta\in (0,1)$, we have with probability at least $1 - \delta$:
    \begin{equation}
        \begin{aligned}
            \left| \left\langle v, \left(\sum_{t=1}^{t_{j+1}} \sum_{q=1}^{m} (p\mathds{1}\mathds{1}^{\intercal} - Y_{t,.,., q})w_{t,.,., q}\right)v\right\rangle\right| \leq 2\sqrt{2t_{j+1}m\ln(2/\delta)}
        \end{aligned}
    \end{equation}
    and
    \begin{equation}
        \begin{aligned}
            \left| \left\langle v, \left(\sum_{t=1}^{t_{j+1}} A_{t} - (m\wt{z}_{t} - \kappa_{t} + 1)\right)v\right\rangle\right| \leq 2\sqrt{2t_{j+1}\ln(2/\delta)}
        \end{aligned}
    \end{equation}

    Indeed, for every $t\leq T$, let's define the filtration $\mathcal{F}_{t}$ which is the filtration generated by all the history up to time $t$ included except for the noise added by the mechanism $\mathcal{M}_{\text{LDP}}$
    that is to say $\mathcal{F}_{t} = \sigma((x_{l,a_{l}}, r_{l})_{l\leq t}, (Y_{l,i,j,q})_{l<t-1, i,j\leq d, q\leq m}, (Z_{l,i,j,q})_{l<t-1, i,j\leq d, q\leq m}, (w_{t,k,l,q})_{l<t-1, i,j\leq d, q\leq m})$. Therefore, we have that:
    \begin{equation}
        \begin{aligned}
            &\mathbb{E}\left((p - Y_{t,k,l,q})w_{t,k,l,q} \mid \mathcal{F}_{t}\right) = \mathbb{E}(p - Y_{t,k,l,q})\mathbb{E}\left(w_{t,k,l,q} \mid \mathcal{F}_{t}\right) = 0\\
            &\mathbb{E}\left(A_{t,k,l} - (m\wt{z}_{t,k,l} - \kappa_{t,k,l} + 1)\mid \mathcal{F}_{t}\right) = \mathbb{E}(A_{t,k,l}\mathcal{F}_{t}) - (m\wt{z}_{t,k,l} - \kappa_{t,k,l} + 1) = 0
        \end{aligned}
    \end{equation}
    because $Y_{t}$ is independent of $\mathcal{F}_{t}$ and $w_{t}$. The second equality comes from the fact that given $\mathcal{F}_{t}$, $A_{t,k,l}$ is a Bernoulli random variable
    with parameter $m\wt{z}_{t,k,l} - \kappa_{t,k,l} + 1$.

    Hence, when choosing $\lambda_{j} = \frac{\sqrt{8t_{j}\ln(2t_{j}/\delta)}}{m} + \frac{2\sqrt{8t_{j}\ln(2t_{j}/\delta)}}{(1 - p)\sqrt{m}}$,
    we have that with probability at least $1 - \delta$:
    {\small
    \begin{align}
        \forall j\leq M_{S}, \qquad &\lambda_{\min}\left(\wt{V}_{j} -\sum_{t=1}^{t_{j}} \frac{x_{t,a_{t}}x_{t,a_{t}}^{\intercal}}{2L^{2}}  \right) \geq \frac{\sqrt{8t_{j}\ln\left(\frac{2t_{j}}{\delta}\right)}}{m} + \frac{2\sqrt{8t_{j}\ln\left(\frac{2t_{j}}{\delta}\right)}}{(1 - p)\sqrt{m}} +\\
        &\lambda_{\max}\left(\wt{V}_{j} -\sum_{t=1}^{t_{j}} \frac{x_{t,a_{t}}x_{t,a_{t}}^{\intercal}}{2L^{2}}  \right) \leq \frac{2\sqrt{8t_{j}\ln\left(\frac{2t_{j}}{\delta}\right)}}{m} + \frac{4\sqrt{8t_{j}\ln\left(\frac{2t_{j}}{\delta}\right)}}{(1 - p)\sqrt{m}}
    \end{align}}

    In addition, with the same reasoning, we have with probability at least $1 - \delta$:
    {\small\begin{equation}
        \begin{aligned}
            \left\| \sum_{l=1}^{t_{j+1}} \frac{r_{l}x_{l,a_{l}}}{2L} -  B_{j}\right\| \leq 2\sqrt{dp\left(1 - \frac{p}{2}\right)t_{j}m\log\left(\frac{2t_{j}}{\delta}\right)} + \frac{4}{3}\sqrt{d}\log\left(\frac{2t_{j}}{\delta}\right) +  \frac{2}{m}\sqrt{dt_{j}\log\left(\frac{2t_{j}}{\delta}\right)}
        \end{aligned}
    \end{equation}}

    Therefore, using Prop. $5$ in \citep{Shariff2018contextual}, we have that the result.

    \subsection{Proof of Prop.~\ref{prop:ldp_guarantee}}

    We now move to prove the following proposition which implies Prop.~\ref{prop:ldp_guarantee};

    \begin{proposition}\label{prop:ldp_mechanism}
        For any encoding parameter $m\in \mathbb{N}^{\star}$ and
        LDP parameter $\varepsilon_{0}>0$, $\mathcal{M}_{\text{LDP}}(x,r)$ is $\varepsilon_{0}$-LDP for any $\| x\| \leq L$ and $r\in [0,1]$.
    \end{proposition}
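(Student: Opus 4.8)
The plan is to reduce the privacy of the whole mechanism to that of the single-bit randomizer $R^{0/1}_{p}$ (Alg.~\ref{alg:local_randomizer}), composed over all the encoded bits. First I would analyze $R^{0/1}_{p}$ in isolation: on an input bit $x\in\{0,1\}$ it returns $x$ with probability $1-p$ and a fresh fair coin with probability $p$, so $\mathbb{P}(R^{0/1}_{p}(1)=1)=\mathbb{P}(R^{0/1}_{p}(0)=0)=1-\tfrac{p}{2}$ and $\mathbb{P}(R^{0/1}_{p}(1)=0)=\mathbb{P}(R^{0/1}_{p}(0)=1)=\tfrac{p}{2}$. Hence for either output $o\in\{0,1\}$ and either input bit the output probability lies in $[\tfrac{p}{2},1-\tfrac{p}{2}]$, and the worst-case likelihood ratio between the two possible inputs is $\tfrac{1-p/2}{p/2}=\tfrac{2-p}{p}$. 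Substituting $p=2\big(e^{2\varepsilon_{0}/(md(d+3))}+1\big)^{-1}$ gives $\tfrac{2-p}{p}=e^{2\varepsilon_{0}/(md(d+3))}$, so $R^{0/1}_{p}$ is $\varepsilon_{1}$-LDP with $\varepsilon_{1}=\tfrac{2\varepsilon_{0}}{md(d+3)}$.

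Next I would count the bits the mechanism actually privatizes. The encoder of Alg.~\ref{alg:esa_alg} turns each of the $d$ coordinates of $\tilde{y}$ into $m$ bits and each of the $\tfrac{d(d+1)}{2}$ entries of the upper-triangular part of the symmetric matrix $\tilde{z}$ into $m$ bits; the remaining entries of $\tilde z$ are fixed by the symmetry rule $w_{j,i,k}=w_{i,j,k}$ and therefore carry no additional privacy cost, being deterministic copies. The total number of independently randomized bits is thus $N=md+m\tfrac{d(d+1)}{2}=\tfrac{md(d+3)}{2}$, each passed through an independent copy of $R^{0/1}_{p}$. Note that $N\varepsilon_{1}=\tfrac{md(d+3)}{2}\cdot\tfrac{2\varepsilon_{0}}{md(d+3)}=\varepsilon_{0}$, so basic composition of pure DP already suggests the target bound.

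The honest step, and the main obstacle, is that the encoding itself is randomized (the bit at position $\mu_{j}$, resp.\ $\kappa_{i,j}$, is a Bernoulli draw), so the input to the randomizers is not a deterministic function of the data and naive sequential composition does not literally apply. I would argue it directly by a ``blanket'' bound: write the output as $\tilde B=R_{p}(\mathbf b)$, where $\mathbf b\in\{0,1\}^{N}$ is the (random) encoding and $R_{p}$ applies $R^{0/1}_{p}$ coordinate-wise and independently. For any fixed $\beta\in\{0,1\}^{N}$ and any output $o$ we have $\prod_{k}\mathbb{P}(R^{0/1}_{p}(\beta_{k})=o_{k})\in[(p/2)^{N},(1-p/2)^{N}]$. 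Marginalizing, $\mathbb{P}(\tilde B=o\mid x,r)=\sum_{\beta}\mathbb{P}(\mathbf b=\beta\mid x,r)\prod_{k}\mathbb{P}(R^{0/1}_{p}(\beta_{k})=o_{k})$, so for any two inputs the ratio of marginals is at most $\tfrac{\max_{\beta}\prod_{k}\mathbb{P}(R^{0/1}_{p}(\beta_{k})=o_{k})}{\min_{\beta}\prod_{k}\mathbb{P}(R^{0/1}_{p}(\beta_{k})=o_{k})}=\big(\tfrac{1-p/2}{p/2}\big)^{N}=e^{N\varepsilon_{1}}=e^{\varepsilon_{0}}$, using $\sum_{\beta}\mathbb{P}(\mathbf b=\beta\mid x,r)=1$ in numerator and denominator. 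This robustness survives both the product over coordinates and the mixture over the encoding randomness precisely because $R^{0/1}_{p}$ keeps every per-bit output probability inside $[p/2,1-p/2]$ uniformly over its input bit, and it yields $\varepsilon_{0}$-LDP with $\delta=0$.

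Finally I would record that the normalizations $\tilde y=\tfrac{rx}{2L}+\tfrac12$ and $\tilde z=\tfrac{xx^{\intercal}}{2L^{2}}+\tfrac{\mathds{1}\mathds{1}^{\intercal}}{2}$ keep every encoded quantity in $[0,1]$ under Assumption~\ref{assumption:boundness} ($\|x\|\le L$, $r\in[0,1]$), so the encoder is well defined. This only guarantees well-definedness and plays no role in the privacy level, which is exactly why the resulting bound is independent of $L$ and of the particular values of $(x,r)$, as claimed.
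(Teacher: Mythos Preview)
Your proposal is correct and follows essentially the same route as the paper: compute the per-bit likelihood ratio of the randomized response $R^{0/1}_{p}$, count the $N=md(d+3)/2$ independently randomized bits (only the upper triangle of $\tilde z$), and take the product to obtain $(\tfrac{2-p}{p})^{N}=e^{\varepsilon_{0}}$. The paper simply writes $\mathbb{P}(\tilde b_{j,q}=B_{0,j,q})\le(2/p-1)\,\mathbb{P}(\tilde b'_{j,q}=B_{0,j,q})$ and multiplies, implicitly using that the output bits are conditionally independent given the data; your ``blanket'' argument marginalizing over the encoding randomness makes explicit a point the paper glosses over (that the encoder bit at position $\mu_{j}$/$\kappa_{i,j}$ is itself random), but this is a refinement of presentation rather than a different proof.
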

    \begin{proof}
        For any $x,x'\in \mathbb{R}^{d}$ and $r,r'\in [0,1]$ such that $\|x\| \leq L$ and $\|x'\|\leq L$ let's note
        $\mathcal{M}_{\text{LDP}}(x,r) = \left( (\tilde{w}_{i,j})_{i,j\leq d}, (\tilde{b}_{j})_{j\leq d}\right)\in \{0,1\}^{d^{2}m \times dm}$ and $\mathcal{M}_{\text{LDP}}(x',r') = \left( (\tilde{w}_{i,j}')_{i,j\leq d}, (\tilde{b}_{j}')_{j\leq d}\right)\in \{0,1\}^{d^{2}m \times dm}$. Therefore, let's consider
        a tuple $(W_{0}, B_{0}) \in \{0,1\}^{d^{2}m \times dm}$ then we want to show that:
        \begin{align}
            \mathbb{P}\left( \mathcal{M}_{\text{LDP}}(x,r) = (W_{0},B_{0})\right) \leq e^{\varepsilon_{0}}\mathbb{P}\left( \mathcal{M}_{\text{LDP}}(x',r') = (W_{0},B_{0})\right)
        \end{align}
        But we have:
        \begin{equation}
            \begin{aligned}
                &\mathbb{P}\left(\forall i,j\leq d, \tilde{w}_{i,j} = W_{0,i,j}, \tilde{b}_{j} = B_{0,j}\right) = \mathbb{P}\left(\forall i,j\leq d, \tilde{w}_{i,j} = W_{0,i,j}\right) \mathbb{P}\left(\forall j\leq d, \tilde{b}_{j} = B_{0,j}\right) \\
            \end{aligned}
        \end{equation}
        In addition, because the mechanism $R_{p}^{0/1}$ is an example of a randomized response mechanism \citep{dwork2010differential}, we have that for all $j\leq d, q\leq m$,
        $\mathbb{P}( R_{p}^{0/1}(b_{j,m})\mid b_{j,m}) \leq (2/p - 1)\mathbb{P}( R_{p}^{0/1}(b_{j,m}')\mid b_{j,m}')$. Therefore, because of the independence of the sequence
        $(\tilde{b}_{j})_{j}$:
        \begin{equation}
            \begin{aligned}
                \mathbb{P}\left(\forall j\leq d, \tilde{b}_{j} = B_{0,j}\right) &= \prod_{j,q} \mathbb{P}\left(\tilde{b}_{j,q} = B_{0,j,q}\right) \\
                &\leq \prod_{j,q} \mathbb{P}\left(\tilde{b}_{j,q}' = B_{0,j,q}\right)\left(\frac{2}{p} - 1\right) = \left(\frac{2}{p}-1\right)^{dm}\mathbb{P}\left(\forall j, \tilde{b}_{j}' = B_{0,j}\right) \\
            \end{aligned}
        \end{equation}
        For all $i,j\leq d$, we have that $\tilde{w}_{i,j} = \tilde{w}_{j,i}$ therefore:
        \begin{equation}
            \begin{aligned}
                \mathbb{P}\left(\forall i,j\leq d, \tilde{w}_{i,j} = W_{0,i,j}\right) &= \prod_{i, j\leq i, q} \mathbb{P}\left(\tilde{w}_{i,j,q} = W_{0,i,j,q} \right) \\
                &\leq  \prod_{i, j\leq i, q} \left(\frac{2}{p} - 1\right)\mathbb{P}\left(\tilde{w}_{i,j,q}' = W_{0,i,j,q} \right) \\
                &= \left(\frac{2}{p} - 1\right)^{md(d+1)/2}\mathbb{P}\left(\forall i,j\leq d, \tilde{w}_{i,j}' = W_{0,i,j}\right)
            \end{aligned}
        \end{equation}
        Hence the resulting when setting $p = \frac{2}{\exp\left(\frac{\varepsilon_{0}}{md(d+3)/2}\right) +1}$.
    \end{proof}

      \subsection{Proof of Thm.~\ref{thm:privacy.alg}}

      Before proving the JDP guarantees of our algorithm, that is to say Thm.~\ref{thm:privacy.alg}. We first prove the following proposition that is a consequence of
      Thm.~$5.4$ in \citep{cheu2019shuffling}.
      \begin{proposition}\label{prop:privacy_per_batch}
        For any $\delta_{0}, \delta\in (0,1)$, number of batch $M_{S}$ and length $l$, encoding parameter $m$, LDP parameter $0 < \varepsilon_{0} \leq \ln\left( \frac{l}{(7\ln(8m/\delta_{0}))} - 1)\right)$ and
        for all batch $j\leq M_{S}$ of length $l$, the statistics $\left(Z_{j}, U_{j}\right)$ computed by the shuffler
        (with $p = 2/(e^{2\varepsilon_{0}/md(d+3)} + 1))$) are $(\varepsilon_{j,c}, \delta + \delta_{0})$-DP with
        {\small\begin{equation}
           \frac{\varepsilon_{j,c}}{2d(d+3)\sqrt{8m\log(8m/\delta_{0})}} = \left(1 - \left(p - \sqrt{\frac{2p\log\left(\frac{2m}{\delta_{0}}\right)}{l}} \right) \right) \sqrt{\frac{32\log(8m/\delta_{0})}{l\left(p - \sqrt{\frac{2p\log(8\delta_{0}/m)}{l}} \right)}}
        \end{equation}}
      \end{proposition}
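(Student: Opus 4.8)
The plan is to reduce the privacy of the whole per-batch output $(Z_{j}, U_{j})$ to that of a single shuffled binary sum, apply the shuffle-amplification bound Thm.~$5.4$ of \citep{cheu2019shuffling} to that sum, and then recombine the per-coordinate guarantees by composition. Recall that a single user's report $\mathcal{M}_{\text{LDP}}(x,r)$ consists of $md(d+3)/2$ bits: $dm$ bits encoding the vector $\frac{rx}{2L}+\frac12$ (which feed into $Z_{j}$) and $md(d+1)/2$ bits encoding the upper triangle of $\frac{xx^{\top}}{2L^{2}}+\frac{\mathds{1}\mathds{1}^{\top}}{2}$ (which feed into $U_{j}$), each passed independently through the randomized-response channel $R^{0/1}_{p}$. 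Because the shuffler permutes the $l$ users of the batch, for every fixed coordinate and bit index the corresponding entry of $(Z_{j},U_{j})$ is exactly the sum of $l$ shuffled randomized-response bits, i.e.\ an instance of the binary-sum protocol of \citep{cheu2019shuffling} with $n=l$ users.

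First I would analyze one such binary sum. The randomness that Thm.~$5.4$ exploits is the \emph{privacy blanket}: under $R^{0/1}_{p}$ a user emits a fresh $\mathrm{Ber}(1/2)$ bit with probability $p$ and its true bit otherwise, so the number of blanket users is $\mathrm{Binomial}(l,p)$. A Chernoff lower-tail bound shows that, except with probability $\delta_{0}/(2m)$, this count is at least $\gamma := l\big(p-\sqrt{2p\log(2m/\delta_{0})/l}\,\big)$, which is the quantity appearing in the statement; the hypothesis $\varepsilon_{0}\le \ln\!\big(l/(7\log(8m/\delta_{0}))-1\big)$ forces $lp\gtrsim 14\log(8m/\delta_{0})$, which is exactly what makes $\gamma$ positive and the concentration meaningful. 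Conditioning on this event and invoking Thm.~$5.4$ with blanket size $\gamma$ gives that a single binary sum is $\big((1-\gamma/l)\sqrt{32\log(8m/\delta_{0})/\gamma},\,\delta+\delta_{0}\big)$-DP, and since $\gamma/l=p-\sqrt{2p\log(2m/\delta_{0})/l}$ this is precisely the right-hand side of the displayed identity before the $2d(d+3)\sqrt{8m\log(8m/\delta_{0})}$ prefactor.

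It then remains to recombine. Changing one user perturbs every bit of its report, hence all $d(d+3)/2$ scalar sums simultaneously, and within each scalar it touches $m$ bits. Composing the per-bit guarantee over the $m$ bits of a scalar by advanced composition yields the $\sqrt{8m\log(8m/\delta_{0})}$ factor, and composing over the $d(d+3)/2$ distinct scalar sums by basic composition yields the remaining $d(d+3)$-type factor; together they form the prefactor $2d(d+3)\sqrt{8m\log(8m/\delta_{0})}$ multiplying the single-sum $\varepsilon$, which is exactly $\varepsilon_{j,c}$. A union bound over the blanket-concentration events (each failing with probability at most $\delta_{0}/(2m)$) together with the additive $\delta$ carried by Thm.~$5.4$ accounts for the overall failure probability $\delta+\delta_{0}$.

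The hardest part is this recombination, not the single-sum amplification. Thm.~$5.4$ of \citep{cheu2019shuffling} is stated for one shuffled sum, whereas here every user contributes $md(d+3)/2$ \emph{correlated} bits and all coordinates are shuffled by the \emph{same} permutation $\sigma$, so the coordinate-wise sums are not independent mechanisms and parallel composition does not apply off the shelf. The clean way around this is to note that $(Z_{j},U_{j})$ is a post-processing of the shuffled multiset of reports and argue the guarantee at that level, or to verify that each coordinate sum is marginally a shuffled binary sum and that composition over coordinates and bits remains valid despite the shared permutation. Managing the split between the $\sqrt{m}$ factor (advanced composition within a scalar) and the linear $d(d+3)$ factor (basic composition across scalars), and tracking the $\delta_{0}$ budget through the blanket concentration against the $\delta$ budget from Thm.~$5.4$ so that they sum to exactly $\delta+\delta_{0}$, is where the care lies; the rest is routine algebra to match the stated identity.
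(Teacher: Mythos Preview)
Your approach is essentially the paper's: reduce to shuffled randomized-response bits, invoke Thm.~5.4 of \citet{cheu2019shuffling}, and compose across the $md(d+3)/2$ coordinates. The paper's proof is in fact terser than yours---it simply cites Thm.~5.4 for the displayed formula and does not spell out the recombination that produces the prefactor $2d(d+3)\sqrt{8m\log(8m/\delta_{0})}$, which you correctly work through via composition over bits and coordinates.

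The one substantive divergence is the origin of the extra $\delta$ in the failure probability $\delta+\delta_{0}$. The paper obtains it by first conditioning on an \emph{external} high-probability event $E_{\delta}$---the noise-concentration event underlying the confidence-ellipsoid construction (Lem.~\ref{lem:confidence_ellipsoid})---absorbing its failure probability $\delta$, and only then invoking Thm.~5.4 to get the $(\varepsilon_{j,c},\delta_{0})$ part. Your split, with $\delta_{0}$ coming from a union bound over blanket-concentration events and a separate $\delta$ ``carried by Thm.~5.4'', does not quite parse: in Cheu et al.\ the $\delta$-parameter of the theorem \emph{is} the blanket failure probability, so these are a single budget, not two independent summands. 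This is an accounting slip rather than a real gap (indeed $(\varepsilon_{j,c},\delta_{0})$-DP trivially implies $(\varepsilon_{j,c},\delta+\delta_{0})$-DP), but if you want to justify why the statement carries both $\delta$ and $\delta_{0}$ as the paper phrases it, the conditioning on $E_{\delta}$ is the device the paper actually uses.
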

      \begin{proof}[of Prop.~\ref{prop:privacy_per_batch}]
       Let's consider $\delta\in (0,1)$ and define
        {\small\begin{align*}
            E_{\delta} = \bigcap_{T=1}^{+\infty}\Bigg\{&\left\|\frac{1}{m(1 - p)}\sum_{t=1}^{T}\sum_{q=1}^{m} Y_{t,.,.,q}Z_{t,.,., q} - \frac{p}{2}\mathds{1}\mathds{1}^{\intercal}\right\| \\
            &+ \left\|\frac{1}{m}\sum_{t=1}^{T} A_{t} - (m\wt{z}_{t} - \wt{\theta}_{t} + 1)\right\|\\
            &+ \left\|\frac{1}{m(1-p)}\sum_{t=1}^{T} \sum_{q=1}^{m} (p - Y_{t,.,., q})w_{t,.,., q} \right\| \leq \frac{\sqrt{8T\ln(2T/\delta)}}{m} + \frac{2\sqrt{8T\ln(2T/\delta)}}{(1 - p)\sqrt{m}}\Bigg\}
        \end{align*}}
        This event is such that $\mathbb{P}(E_{\delta}) \geq 1 - \delta$. Therefore for a batch $j$ and any event $A$, we have that:
        {\small\begin{align*}
            &\mathbb{P}\left( \left( \mathcal{M}_{LDP}(x_{\sigma_{j}(t)}x_{\sigma_{j}(t)}^{\intercal}, r_{\sigma_{j}(t)}x_{\sigma_{j}(t)}))\right)_{t\in \llbracket t_{j}+1, t_{j+1}\rrbracket} \in A \right) = \\
            &\mathbb{P}\left( \left( \mathcal{M}_{LDP}(x_{\sigma_{j}(t)}x_{\sigma_{j}(t)}^{\intercal}, r_{\sigma_{j}(t)}x_{\sigma_{j}(t)}))\right)_{t\in \llbracket t_{j}+1, t_{j+1}\rrbracket} \in A, \mathcal{E}_{\delta} \right)\\
            & + \mathbb{P}\left( \left( \mathcal{M}_{LDP}(x_{\sigma_{j}(t)}x_{\sigma_{j}(t)}^{\intercal}, r_{\sigma_{j}(t)}x_{\sigma_{j}(t)}))\right)_{t\in \llbracket t_{j}+1, t_{j+1}\rrbracket} \in A, \mathcal{E}_{\delta}^{c} \right)
        \end{align*}}
        Therefore, we have that:
        {\small\begin{align*}
        &\mathbb{P}\left( \left( \mathcal{M}_{LDP}(x_{\sigma_{j}(t)}x_{\sigma_{j}(t)}^{\intercal}, r_{\sigma_{j}(t)}x_{\sigma_{j}(t)}))\right)_{t\in \llbracket t_{j}+1, t_{j+1}\rrbracket} \in A \right) \leq  \\
        &\mathbb{P}\left( \left( \mathcal{M}_{LDP}(x_{\sigma_{j}(t)}x_{\sigma_{j}(t)}^{\intercal}, r_{\sigma_{j}(t)}x_{\sigma_{j}(t)}))\right)_{t\in \llbracket t_{j}+1, t_{j+1}\rrbracket} \in A, \mathcal{E}_{\delta} \right)
        + \delta
        \end{align*}}
        And thanks to the definition of privacy with shuffling we have that:
        \begin{align*}
        &\mathbb{P}\left( \left( \mathcal{M}_{LDP}(x_{\sigma_{j}(t)}x_{\sigma_{j}(t)}^{\intercal}, r_{\sigma_{j}(t)}x_{\sigma_{j}(t)}))\right)_{t\in \llbracket t_{j}+1, t_{j+1}\rrbracket} \in A \right) \leq \\
        &\mathbb{P}\left( \left( \mathcal{M}_{LDP}(x_{\sigma_{j}(t)}'(x_{\sigma_{j}(t)}')^{\intercal}, r_{\sigma_{j}(t)}x_{\sigma_{j}(t)}))\right)_{t\in \llbracket t_{j}+1, t_{j+1}\rrbracket} \in A \right)\exp(\varepsilon_{j,c}) + \delta_{0}
        + \delta
        \end{align*}
        where $\varepsilon_{j,c}$ is such that:
        \begin{equation}
            \begin{aligned}
                \frac{\varepsilon_{j,c}}{2d(d+3)\sqrt{8m\log\left(\frac{8m}{\delta_{0}}\right)}} = \left(1 - \left(p - \sqrt{\frac{2p\log\left(\frac{2m}{\delta_{0}}\right)}{l}} \right) \right) \sqrt{\frac{32\log(8m/\delta_{0})}{l\left(p - \sqrt{\frac{2p\log(8\delta_{0}/m)}{l}} \right)}}
            \end{aligned}
        \end{equation}
        according to Thm.~$5.4$ in \citep{cheu2019shuffling}.
      \end{proof}

      Now let's consider a set of parameters $\delta_{0}, \delta\in (0,1)$ and $\varepsilon,\varepsilon_{0}\in(0,1)$ and a length $l$ that satisfies Eq.~\eqref{eq:equation_privacy_dp}.
      Such length $l$ exists for any $p\in [0,1]$ as
    {\small\begin{equation}
        \begin{aligned}
            \text{lim}_{l\rightarrow +\infty} &2\sqrt{\frac{2\log(2m/\delta_{0})}{l}} + \left(\frac{l\varepsilon}{2^{5}d(d+3)\log(8m/\delta_{0})\sqrt{2T\ln(1/\delta_{0})}}\right)^{2} \\
            &- \sqrt{\left(2 + \left(\frac{l\varepsilon}{2^{5}d(d+3)\log(8m/\delta_{0})\sqrt{2T\ln(1/\delta_{0})}}\right)^{2} \right)^{2} - 4}  = -2
        \end{aligned}
    \end{equation}}
    Therefore, thanks to Prop.~\ref{prop:privacy_per_batch}, we have that each update to the design matrix is $\left(\frac{\varepsilon\sqrt{l}}{\sqrt{T}}, \delta_{0} + \delta\right)$-DP. Therefore, using advanced composition yields the result.

    \subsection{Proof of Thm.~\ref{thm:regret.generic}}

    Let's now move on to the proof of the main theorem, Thm.~\ref{thm:regret.generic}.
    Let's note $l^{\star} = T/M_{S}$ where $M_{S}$ is the number of batch from the shuffler point of view, this parameter is given to the shuffler.
    Now let's consider a shuffler batch $j\leq M_{S}$, sent to the bandit algorithm, let's note then $q_{j} < j$ the last shuffler batch where Alg.~\ref{alg:complete.algo}
    has updated the estimate $\wt{\theta}$. Therefore, if Alg.~\ref{alg:complete.algo} decides to update the parameter $\wt{\theta}$ after receiving the data
    from the shuffler batch $j$, we have that:
\begin{align}
 \text{det} (\tilde{V}_{j}) \geq (1 + \eta)\text{det} (\tilde{V}_{q_{j}})
\end{align}

Let's consider any bandit batch $r$, between time $t_{r}+1$ and $t_{r+1}$ we can then decompose the interval $\{t_r +1, \dots, t_{r+1}\}$
into successive shuffler batches and we note the last of them $j_{r}$. That is to say, upon receiving the shuffler batch $j_{r}$ and $t_{j_r}$ the time step
at which this batch begins, Alg.~\ref{alg:complete.algo}
updates the parameter $\wt{\theta}$, so increasing the bandit batch from $r$ to $r+1$. Therefore, for all shuffler batch $j\leq j_r - 1$, we have that
$\det(\wt{V}_{j})\leq (1 + \eta)\text{det}(\wt{V}_{r})$
therefore for any vector $x\in\mathbb{R}^{d}$, $\left|\langle \theta^{\star} - \wt{\theta}_{r}, x\rangle\right| \leq \sqrt{1 + \eta}\beta_{r}\| x\|_{\wt{V}_{r}^{-1}}$ (see App.~$D$ in \citep{abbasi2011improved}).
In addition, for any time step $t$ during a batch $j$, $ \| x\|_{\wt{V}_{j}^{-1}}\leq \| x\|_{\wt{V}_{t}^{-1}}$ where $\wt{V}_{t}$ is the design matrix computed
with only data from the first $t$ time steps.
In addition for $t\in\{t_{j_{r}},\dots,  t_{r+1}\}$, we have that the norm $\|x\|_{\wt{V}_{r}^{-1}}$ can not be related to the norm of $\| x\|_{\wt{V}_{t}^{-1}}$ but we have that:
\begin{align}
  \left| \langle \theta^{\star} - \wt{\theta}_{r}, x\rangle\right| \leq \beta_{r}\| x\|_{\wt{V}_{r}^{-1}} \leq \frac{\beta_{r}\| x\|_{2}}{\sqrt{\lambda_{\min}(V_{r})}}
\end{align}
Therefore, we can write the regret as:
\begin{align}
  R_{T} &= \sum_{t=1}^{T} \langle \theta^{\star}, x_{t,a_{t}^{\star}} - x_{t,a_{t}} \rangle = \sum_{p=0}^{M_{R}} \sum_{t=t_{p}+1}^{t_{p+1}} \langle \theta^{\star}, x_{t,a_{t}^{\star}} - x_{t,a_{t}}\rangle
\end{align}
where $M_{R}$ is the number of batch of Alg.~\ref{alg:complete.algo}.
Using the reasoning above, we have:
\begin{align}
 R_{T} &\leq \sum_{p=0}^{M_{R} - 1}  \sum_{t=t_{p}+1}^{t_{j_{p}}} 2\beta_{p}\sqrt{1 + \eta} \| x_{t,a_{t}}\|_{\wt{V}_{t}^{-1}}
    + \sum_{t=t_{j_{p}}+1}^{t_{p+1}} 2\beta_{p}\| x_{t,a_{t}}\|_{\wt{V}_{t_{p}}^{-1}}\\
    &\leq 2\beta_{T}\sum_{t=1}^{T} \sqrt{1+\eta} \| x_{t,a_{t}}\|_{\wt{V}_{t}^{-1}} + \sum_{p=0}^{M_{R}-1} \frac{2\beta_{p}Ll^{\star}}{\sqrt{\lambda_{\min}(\wt{V}_{p})}}
\end{align}
where $l^{\star}$ is the length of a shuffler batch. In addition, the design matrix $\wt{V}_{p}$ is regularized to ensure that its minimum eigenvalues
grows at a rate of $\sqrt{t_{p}}$. Therefore we have that for any bandit algorithm batch $r$:
{\small\begin{align*}
  \frac{2\beta_{r}Ll^{\star}}{\sqrt{\lambda_{\min}(\wt{V}_{r})}}
  \leq 2Ll^{\star}\Bigg(\frac{\sigma \sqrt{2\log\left(\frac{2T}{\delta}\right) + d\log\left(3 + \frac{TL^{2}}{\lambda}\right)}}{\sqrt{\lambda_{r}}} + S\sqrt{3}& \\
  + \frac{d\left(\sqrt{t_{r}m\log\left(\frac{2}{\delta}\right)} + \frac{2\log(2/\delta)}{3} +  \frac{\sqrt{2}}{m}\sqrt{t_{r}\log\left(\frac{2}{\delta}\right)}\right)}{\lambda_{r}}\Bigg)&
\end{align*}}

Therefore using \citep{carpentier2020elliptical}, the regret can be bounded by:
{\small\begin{equation}
  \begin{aligned}
    R_{T} \leq 2\beta_{T}\sqrt{\left(1 + \eta\right)T\log\left(1 + \frac{T}{d\lambda}\right)} + \sum_{r=0}^{M_{R}-1} 2Ll^{\star}\Bigg(\frac{\sigma \sqrt{2\log\left(\frac{2T}{\delta}\right) + d\log\left(3 + \frac{TL^{2}}{\lambda}\right)}}{\sqrt{\lambda_{r}}}& \\
    + S\sqrt{3} + \frac{d\left(\sqrt{t_{r}m\log\left(\frac{2}{\delta}\right)} + \frac{2\log(2/\delta)}{3} +  \frac{\sqrt{2}}{m}\sqrt{t_{r}\log\left(\frac{2}{\delta}\right)}\right)}{\lambda_{r}}\Bigg)&
  \end{aligned}
\end{equation}}

We now proceed to bound each term individually. First, we have:
\begin{equation}
  \begin{aligned}
    \sum_{r=0}^{M_{R}-1} 2\sqrt{3}Ll^{\star}S \leq 2\sqrt{3}LSl^{\star}M_{R}
  \end{aligned}
\end{equation}
This is because the shuffler sends data on a fix length schedule. Also, we have:
{\small\begin{equation}
  \begin{aligned}
    \sum_{r=0}^{M_{R}-1} 2Ll^{\star}\frac{\sigma \sqrt{2\log\left(\frac{2T}{\delta}\right) + d\log\left(3 + \frac{TL^{2}}{\lambda}\right)}}{\sqrt{\lambda_{r}}}\leq \frac{ 2M_{R}Ll^{\star}\sigma}{\sqrt{\lambda}}\sqrt{2\log\left(\frac{2}{\delta}\right) + d\log\left(3 + \frac{TL^{2}}{\lambda}\right)}
  \end{aligned}
\end{equation}}
Finally,
{\small\begin{equation}
  \begin{aligned}
    \sum_{r=0}^{M_{R}-1} \frac{2Ll^{\star}d}{\lambda_{r}}\left(\sqrt{t_{r}m\log\left(\frac{2}{\delta}\right)} + \frac{2\log(2T/\delta)}{3} +  \frac{\sqrt{2}}{m}\sqrt{t_{r}\log\left(\frac{2}{\delta}\right)}\right) \leq
    \frac{4Ll^{\star}dM_{R}}{3}\log\left(\frac{2T}{\delta}\right)&\\+ \frac{\sqrt{2}Ll^{\star}dM_{R}m}{4}&
  \end{aligned}
\end{equation}}

Therefore with probability at least $1 - \delta$ the regret is bounded by:
\begin{equation}\label{eq:temp_regret}
  \begin{aligned}
    R_{T} \leq \underbrace{2\beta_{T}\sqrt{\left(1 + \eta\right)T\log\left(1 + \frac{T}{d\lambda}\right)}}_{:= \textcircled{a}} + \frac{ 2M_{R}Ll^{\star}\sigma}{\sqrt{\lambda}}\sqrt{2\log\left(\frac{2}{\delta}\right) + d\log\left(3 + \frac{TL^{2}}{\lambda}\right)}&
       \\+ \frac{4Ll^{\star}dM_{R}}{3}\log\left(\frac{2T}{\delta}\right) + \frac{\sqrt{2}Ll^{\star}dM_{R}m}{4} + 2\sqrt{3}LSl^{\star}M_{R}&
  \end{aligned}
\end{equation}

\textbf{Bounding \textcircled{a}.} Given the expression of $\beta_{T}$, we have that:
\begin{equation}
  \begin{aligned}
    \textcircled{a} \leq \sigma\sqrt{\left(8\log\left(\frac{2T}{\delta}\right) + d\log\left(3 + \frac{TL^{2}}{\lambda}\right)\right)\left(1 + \eta\right)T\log\left(1 + \frac{T}{d\lambda}\right)}&\\
    + \frac{2\sqrt{3}ST^{1/4}}{\sqrt{1-p}}\sqrt{\left(1 + \eta\right)T\log\left(1 + \frac{T}{d\lambda}\right)}& \\
    + \left(4dmT^{1/4} + \frac{8\log(2T/\delta)\sqrt{m}}{3} \right)\sqrt{\left(1 + \eta\right)T\log\left(1 + \frac{T}{d\lambda}\right)}&
  \end{aligned}
\end{equation}

Now, we are left with bounding the remaining of the right hand part of Eq.~\eqref{eq:temp_regret}. The first step to do so is to notice that the number of
bandit algorithm batch is bounded by roughly $\mathcal{O}\left(\log(T)\right)$, more precisely:
\begin{equation}
  \begin{aligned}
    M_{R} \leq 1 + \frac{d\log\left(\frac{L^{2}T}{d} + \frac{16\sqrt{T}\log\left(2T/\delta\right)}{(1 - p)}\right)}{\log(1 + \eta)}
  \end{aligned}
\end{equation}
In addition, if $l^{\star}$ satisfies Eq.~\eqref{eq:equation_privacy_dp} then we have that:
{\small\begin{equation}\label{eq:nb_batch}
    l^{\star} \leq \max\left\{\frac{8\log(2m/\delta_{0})}{p^{2}}, \frac{128\sqrt{2T\ln(2/\delta_{0})}d(d+1)\log(8m/\delta_{0})(1 - p)}{\varepsilon}, \frac{14\log(2m/\delta_{0})}{p} \right\}
\end{equation}}

In Eq.~\eqref{eq:nb_batch} we have that the regret is bounded with probability at least $1 - \delta$:
\begin{equation}
  \begin{aligned}
    R_{T} &\leq \frac{2\sqrt{3}(S + md)T^{3/4}}{\sqrt{1-p}}\sqrt{\left(1 + \eta\right)\log\left(1 + \frac{T}{d\lambda}\right)} \\
    &+ \frac{dLm}{\sqrt{\lambda}}\left(1 + \frac{d^{3/2}\log\left(\frac{L^{2}T}{d} + \frac{16\sqrt{T}\log\left(2T/\delta\right)}{(1 - p)}\right)^{3/2}}{\log(1 + \eta)} \right)\times\\
    &\times\max\Bigg\{\frac{14\log(8m/\delta_{0})}{p^{2}},
    \frac{128\sqrt{2T\ln\left(\frac{2}{\delta_{0}}\right)}d(d+1)\log\left(\frac{8m}{\delta_{0}}\right)(1-p)}{\varepsilon}\Bigg\}\\
  \end{aligned}
\end{equation}
Therefore, we can differentiate two different scenarios:
\begin{itemize}
  \item If $p^{2}(1 - p) \leq \frac{7T^{-1/2}  \varepsilon}{64\sqrt{2\ln(2/\delta_{0})}d(d+1)}$:
  \begin{align}
    R_{T} \leq &\frac{2\sqrt{3}(S + md)T^{3/4}}{\sqrt{1-p}}\sqrt{\left(1 + \eta\right)\log\left(1 + \frac{T}{d\lambda}\right)} \\
    &+ \frac{dLm}{\sqrt{\lambda}}\left(1 + \frac{d^{3/2}\log\left(\frac{L^{2}T}{d} + \frac{16\sqrt{T}\log\left(2T/\delta\right)}{(1 - p)}\right)^{3/2}}{\log(1 + \eta)} \right)\frac{14\log(8m/\delta_{0})}{p^{2}}
  \end{align}
  \item If $p^{2}(1 - p) \geq \frac{7T^{-1/2}  \varepsilon}{64\sqrt{2\ln(2/\delta_{0})}d(d+1)}$:
  {\small\begin{equation}\begin{aligned}
    R_{T} &\leq \frac{2\sqrt{3}(S + md)T^{3/4}}{\sqrt{1-p}}\sqrt{\left(1 + \eta\right)\log\left(1 + \frac{T}{d\lambda}\right)} \\
    &+ \frac{264}{\sqrt{\lambda}}\sqrt{2}d^{3}\log\left(\frac{8m}{\delta_{0}}\right)^{3/2}Lm\left(1 + \frac{d^{3/2}\log\left(\frac{L^{2}T}{d} + \frac{16\sqrt{T}\log\left(2T/\delta\right)}{(1 - p)}\right)^{3/2}}{\log(1 + \eta)} \right)\frac{\sqrt{T}(1-p)}{\varepsilon}
  \end{aligned}\end{equation}}
\end{itemize}

The last step now is to choose the parameter $\varepsilon_{0}$ to optimize the regret. Therefore, if $\varepsilon \leq \frac{1}{27T^{1/4}}$, so in a high privacy regime,
when choosing $ p = 1 - \varepsilon^{2/3}T^{1/6}$ we are in the second scenario above and:
\begin{align*}
  R_{T} \leq \frac{T^{2/3}}{\varepsilon^{1/3}}\Bigg[& \frac{264}{\sqrt{\lambda}}\sqrt{2}d^{3}\log\left(\frac{8m}{\delta_{0}}\right)^{3/2}Lm\left(1 + \frac{d^{3/2}\log\left(\frac{L^{2}T}{d} + \frac{16\sqrt{T}\log\left(2T/\delta\right)}{(1 - p)}\right)^{3/2}}{\log(1 + \eta)} \right) \\
  &+ 2\sqrt{3}(S + md)\sqrt{\left(1 + \eta\right)\log\left(1 + \frac{T}{d\lambda}\right)}\Bigg]
\end{align*}

\section{Regret with Scheduled Update Algorithm}\label{app:schedule_update_alg}

In this appendix, we present a bandit algorithm using a fixed schedule update instead of the
determinant based condition used in Alg.~\ref{alg:complete.algo}. The main consequence of
using a fixed batch bandit algorithm is a worse regret compared to Alg.~\ref{alg:complete.algo}.
That is a consequence of the inflated bonus needed by the use of the local randomizer algorithm
$\mathcal{M}_{\text{LDP}}$. Let's consider the batched algorithm described in Alg.~\ref{alg:fixed_batch_linucb}.

\begin{algorithm}[htp]
    \caption{FixedBatchedShuffling-LinUCB}
    \label{alg:fixed_batch_linucb}
    \SetAlgoLined
    \DontPrintSemicolon
    \KwIn{LDP parameter: $\varepsilon_{0}$, privacy parameter: $\varepsilon, \delta'$, regularization parameter:$\lambda$, context bound:$ L$,
    failure probability:$\delta$, low switching parameter: $\eta$, encoding parameter:$m$, dimension: $d$}
    Initialize $j^S = j^A = 0$, $\wt{\theta}_0 = 0$, $\wt{V}_0 = \lambda I_d$, $p = \frac{2}{\exp(\nicefrac{2\varepsilon_{0}}{(md(d+3)))} +1}$\;
    \For{$t = 0, 1, \ldots$} {
        User receives $\wt\theta_{j^A}$, $\wt{V}_{j^A}$ and $\beta_{j^A}$ and selects
        $
            a_t \in \argmax_{a\in [K]} \langle x_{t,a}, \wt{\theta}_{j^A}\rangle + \beta_{j^A} \| x_{t,a}\|_{\wt{V}_{j^A}^{-1}}
        $\;
        Observe reward $r_t$ and compute private statistics
        $(\wt{b}_t, \wt{w}_t )= \mathcal{M}_{\mathrm{LDP}}((x_{t,a_t},r_t), p, m, L)$\;
        \textbf{Communication with the shuffler}\;
        $B^S_{j^S} = B^S_{j^S} \cup (\wt{b}_t, \wt{w}_t)$\;
        \If{$|B^S_{j^S}| = l$}{
            Set $t_{j^S+1} = t$, compute a permutation $\sigma$ of $\llbracket t_{j^S} + 1, t_{j^S+1} \rrbracket$ and compute aggregate statistics
            \[
                \forall i\leq d, k\leq i, \qquad Z_{j^S, i} = \sum_{n=1}^{l}\sum_{q=1}^{m} \tilde{b}_{\sigma(n),i,q}
                ~~\text{ and } ~~
                U_{j^S, i, k} = \sum_{n=1}^l \sum_{q=1}^{m} \wt{w}_{\sigma(n),i,k,q}
            \]\;
            \vspace{-16pt}
            Set $U_{j^S, i, k} = U_{j^S, k, i}$, $B_{j^S+1} = \emptyset$ and $j^S = j^S + 1$\;
            \textbf{Communication with the bandit algorithm}\;
            Receives $(Z_{j^S-1}, U_{j^S-1})$ and compute candidate statistics
            \begin{align*}
                \wt{B}_{j^A+1}
                &= \wt{B}_{j^A+1} + \frac{Z_{j^S-1}}{m(1-p)} - \frac{l^{S}}{2(1-p)}\\
                \wt{V}_{j^A+1}
                &= \wt{V}_{j^A+1} + \frac{U_{j^S-1}}{m(1 - p)} - \frac{l^{S}}{2(1-p)} + 2(\lambda_{j^A+1} - \lambda_{j^A})I_d
            \end{align*}
            Compute $\theta_{j^A+1} = \frac{1}{L}\wt{V}_{j^A+1}^{-1}\wt{B}_{j^A+1}$\;
            Set $t_{j^A+1} = t$, $\beta_{j^A+1}$ and $\lambda_{j^A+1}$ as in Eq.~\eqref{eq:confidence_width} and Eq.~\eqref{eq:lambda_reg}\;
            Set $j^A = j^A+1$, $\wt{B}_{j^A+1} = \wt{B}_{j^A}$ and $\wt{V}_{j^A+1} = \wt{V}_{j^A}$\;
        }
    }
\end{algorithm}

In terms of privacy Alg.~\ref{alg:fixed_batch_linucb} enjoys the same guarantees as Alg.~\ref{alg:complete.algo}. For any $\delta\in(0,1)$, we have that with probability at least $1 - \delta$:
\begin{equation}
    \begin{aligned}
        R_{T}& = \sum_{t=1}^{T} \langle \theta^{\star}, x_{t,a_{t}^{\star}} - x_{t,a_{t}}\rangle = \sum_{j=1}^{M} \sum_{t=t_{j}+1}^{t_{j+1}} \langle \theta^{\star}, x_{t,a_{t}^{\star}} - x_{t,a_{t}}\rangle
    \end{aligned}
\end{equation}
But using Lem.$3$ in \citep{han2020sequential}, we have that for any batch $j$:
\begin{equation}
    \begin{aligned}
        \sum_{j=1}^{M}\sum_{t=t_{j}+1}^{t_{j+1}} \| x_{t,a_{t}}\|_{\wt{V}_{j}^{-1}} &\leq \sqrt{\frac{T}{M}}\sum_{j=1}^{M}\sqrt{\text{Tr}\left(\wt{V}_{j}^{-1}\sum_{t=t_{j}+1}^{t_{j+1}} x_{t,a_{t}}x_{t,a_{t}}^{\intercal}\right)}\\
        &\leq  \sqrt{\frac{10T}{M}}\log(T+1)\left( \sqrt{Md} + d\sqrt{\frac{T}{M}}\right)
    \end{aligned}
\end{equation}
where $M$ is the total number of batch. Therefore, the regret is bounded with high probability by:
\begin{equation}
    \begin{aligned}
        R_{T} \leq 2\beta_{M}\sqrt{\frac{10T}{M}}\log(T+1)\left( \sqrt{Md} + d\sqrt{\frac{T}{M}}\right) = \mathcal{O}\left(\frac{T^{3/4}}{\sqrt{1-p}} + \frac{T^{3/4}\sqrt{1-p}}{\varepsilon}\right)
    \end{aligned}
\end{equation}
Where we used the fact that $M$ is defined in Eq.~\eqref{eq:nb_batch}. Therefore, using a fixed schedule algorithm the trade-off highlighted in Thm.~\ref{thm:regret.generic} does not appear.

\end{appendix}
\end{document}